\documentclass{article} 
\usepackage{iclr2027_conference,times}
\usepackage{amsthm}
\newtheorem{Lemma}{Lemma}

\theoremstyle{definition}
\newtheorem{Definition}{Definition}
\usepackage{amsfonts}

\usepackage{amsmath,amsfonts,bm}

\def\eqref#1{equation~\ref{#1}}

\def\1{\bm{1}}

\def\vtheta{{\bm{\theta}}}
\def\va{{\bm{a}}}
\def\vb{{\bm{b}}}

\def\vr{{\bm{r}}}

\def\vx{{\bm{x}}}

\def\mA{{\bm{A}}}

\def\mG{{\bm{G}}}

\def\mV{{\bm{V}}}
\def\mW{{\bm{W}}}

\def\mLambda{{\bm{\Lambda}}}

\DeclareMathAlphabet{\mathsfit}{\encodingdefault}{\sfdefault}{m}{sl}
\SetMathAlphabet{\mathsfit}{bold}{\encodingdefault}{\sfdefault}{bx}{n}

\usepackage{hyperref}
\usepackage{url}
\usepackage{graphicx}

\usepackage{booktabs}
\usepackage{caption}
\usepackage{subcaption}
\usepackage{amsthm}
\usepackage{listings}
\usepackage{tikz}
\usetikzlibrary{arrows.meta, positioning, patterns}
\usepackage{algorithm}
\usepackage{algpseudocode}
\usepackage{dsfont}

\title{Efficient Message Passing for Partial Differential Equation Priors}

\author{Anna Kazachkova, Leonhard Hennicke, Rainer Schlosser, Ralf Herbrich\\
Hasso Plattner Institute \\
University of Potsdam \\
Potsdam, Germany \\
\texttt{\{anna.kazachkova,leonhard.hennicke,rainer.schlosser}\\
\texttt{ralf.herbrich\}@hpi.de}
}

\iclrfinalcopy 
\begin{document}

\maketitle

\begin{abstract}
Prior information for real-world physical quantities is most elegantly expressed via partial differential equations (PDEs).
In this paper, we propose a novel way to solve PDEs using probabilistic inference on a factor graph. In general, factor graphs provide a natural way to encode prior knowledge into a model as explicit factors; here, this knowledge is provided by a governing PDE, which narrows the solution space, while observed data further shape the posterior over the parameters.
The approximate parameter posterior is inferred using message passing based on moment matching, without posterior sampling or global gradient-based optimization.
We demonstrate our approach on the first-order advection and the second-order semi-linear Fisher-KPP equations, where it achieves predictive accuracy comparable to a standard baseline while providing structured predictive uncertainty. Moreover, the inferred posterior marginal means and uncertainty structure match more closely those obtained using Hamiltonian Monte Carlo than the evaluated mean-field variational inference baseline, while requiring up to 10× less training time in our experiments, with inference speed comparable to variational inference.
\end{abstract}

\section{Introduction}
\label{section:introduction}

Partial differential equations (PDEs) often describe physical processes in space and time. In many practical problems, a known parameterized equation is accompanied by a set of observations that provide additional information and further constrain the problem. Combining these observations with the PDE forms a machine learning problem with multiple objectives called \textit{physics-informed learning}.
The idea of solving this multiobjective problem by training a neural network was introduced in \cite{raissi2019physics} and termed a physics-informed neural network. (PINN). There, the solution of an equation is represented by a neural network trained using both observed solution values and the governing PDE. We adopt the same setup in this work, but instead of solving it as a global optimization problem, we formulate the \textit{PINN as a factor graph}, where the neural network learns from the observed data probabilistically, and the governing \textit{PDE acts as prior physical knowledge} that constrains the solution space.

While effective for many problems, the standard PINN formulation yields only a point estimate of the network parameters; our work instead adopts a Bayesian formulation and infers a posterior distribution over these parameters.
Existing Bayesian methods for PINNs \citep{yang2021b,zong2025randomized,raj2025deep} typically perform posterior inference through Hamiltonian Monte Carlo (HMC; \citealp{duane1987hybrid}) or gradient-based variational inference (VI; \citealp{jordan1999introduction}). Sampling can be computationally expensive, while practical variational approximations may differ substantially from sampling-based references, especially in predictive uncertainty \citep{blei2017variational}.  

We approach this inference problem from a different angle and use \textit{message passing}: by propagating information directly through the factor graph, we obtain an efficient, gradient-descent- and sampling-free framework for approximate Bayesian inference over the PDE solution.
Our goal is not to establish message passing as a general replacement for Bayesian PINNs, but to test whether this inference perspective is viable in a physics-informed setting. We use HMC as a sampling-based reference and demonstrate that our approach yields similar results in prediction accuracy and posterior parameter estimates, while being more time-efficient.

\paragraph{Problem formulation}
In this work, we solve the PDE \textit{forward problem}, i.e., predicting the future states of a parameterized physical system. For the shallow neural representation used throughout this work, the trainable parameters \(\vtheta:=\mW\) are treated as random variables with prior \(p(\vtheta)\), and the learning objective is to infer their posterior distribution from both observed data and physical constraints. 
Given observations
\(\mathcal{D}=\{(\vx_n,y_n)\}_{n=1}^{N}\) and PDE constraints imposed at
collocation points \(\mathcal{C}=\{\vx_m\}_{m=1}^{M}\) and modeled through PDE residual variable $R$, the target distribution is
\begin{equation}
\label{eq:bayesian_posterior}
    p(\vtheta \mid \mathcal{D},\mathcal{C})
    \propto
    p(\vtheta)\,
    p(\mathcal{D}\mid\vtheta)\,
    p(R = 0 \mid \mathcal{C}, \vtheta).
\end{equation}

Our message passing algorithm approximates this posterior with a Gaussian belief $q(\vtheta)$:
\begin{equation}
\label{eq:posterior-approx}
    q(\vtheta)
    \approx
    p(\vtheta\mid\mathcal D,\mathcal C).
\end{equation}

In this work, we focus on first-order linear PDE constraints of this form for a given coefficient vector~$\vb$:
\begin{equation}
\label{eq:first_order_pde}
    \vb^\top \nabla_{\vx} f_U(\vx;\vtheta) = 0.
\end{equation}

\paragraph{Our contributions}
We aim to answer the following question: Can approximate Bayesian inference on a factor graph solve PDEs from observed data and encoded prior physical knowledge using message passing?
Our contributions are summarized as follows:
\begin{itemize}
    \item \textbf{Probabilistic factor-graph formulation:}
    We introduce a novel perspective on PDE solving as probabilistic inference on a factor graph (Section~\ref{section:approach}). In this graph, a neural network is trained to satisfy the observed data, while incorporated as prior physical knowledge PDE narrows the solution space. Deterministic relations between variables are encoded as Dirac-delta distributions, and therefore messages are propagated with the Direct Message Approximation (DMA; \citealp{herbrich2026dma}) framework; more details are in Definition \ref{definition:dma} in Section \ref{subsection:fg}. We derive the required factor messages for the first-order linear advection equation and the second-order semi-linear Fisher-KPP with a nonlinear reaction term.

    \item \textbf{Competitive PDE solution accuracy with uncertainty estimates:}
    In our experiments, the message passing approach recovers PDE solutions that closely match the exact solutions, while additionally providing uncertainty estimates (Section~\ref{subsection:solution-fidelity}). We further observe that the predicted variance exhibits structured behavior across the evaluated PDE dynamics.
    
    \item \textbf{Posterior marginal inference close to HMC at substantially lower computational cost:}
    Compared with HMC and VI, our approach recovers similar posterior marginal means while avoiding posterior sampling and gradient-based variational optimization (Section~\ref{subsection:bayesian_comparison}). It is consistently faster than HMC and remains computationally competitive with VI, while its posterior uncertainty is closer in structure to the HMC reference.
\end{itemize}

\section{Preliminaries}

To frame our approach, we first revisit the fundamental concepts and introduce notation for approximate message passing on factor graphs.

\subsection{Probabilistic inference on factor graphs}
\label{subsection:fg}
\begin{Definition}A {\textit{factor graph}}
is a bipartite graph with two types of nodes - representing variables and functions of these variables. In the probabilistic inference context, it represents a joint density of random variables that can be computed as the product of all factors. Hidden variables of this probabilistic model are inferred with a \textit{message passing} algorithm. 
\end{Definition}

In this work, we use a \textit{Gaussian approximation} for all information flow on the factor graph, i.e., all variables and messages are represented by Gaussian distributions. When arithmetically possible and computationally efficient, we compute the resulting moments exactly; in most cases, however, we approximate the resulting distribution by a Gaussian via moment-matching. As stated in Equation \ref{eq:posterior-approx}, our inferred posterior is approximate - we elaborate on it further in Appendix \ref{appendix:theory}.

\begin{Definition}{\textit{Message passing}} generally refers to a framework in which nodes in a complex system exchange simple local messages to perform inference \citep{mackay2003information}. In the probabilistic inference context, these nodes are random variables, factors encode statistical dependencies between them, and messages are distributions that communicate information about these variables. In particular, we use the \textit{sum-product algorithm} \citep{kschischang2001factor}, in which information is propagated through the factor graph by passing messages between variable and factor nodes: variable nodes collect information from their neighboring factors, while factor nodes combine this information according to the local dependency they represent and pass the resulting information to neighboring variables. For a general introduction to the sum-product algorithm and Gaussian arithmetic, refer to Appendix~\ref{appendix:sum-product}. Throughout this work, we use the notation $m_{X_k\rightarrow f_i}(x_k)$ and $m_{f_i\rightarrow X_k}(x_k)$ to denote messages passed from a variable node $X_k$ to a factor node $f_i$ and from a factor node $f_i$ to a variable node $X_k$, respectively.
\end{Definition}

\begin{Definition}{\textit{Direct Message Approximation}} (DMA; \citealp{herbrich2026dma}) is a framework that extends the sum-product algorithm. Instead of approximating the marginals, Gaussian messages from factors to variable, $m_{f_i\rightarrow X_k}(x_k)$ are directly approximated via moment-matching. In our PDE factor graph, most relations between variables are modeled as Dirac-delta factors, which make DMA the only admissible framework for such message passing; we therefore use this approach by default throughout our work. More details on the background and its relation to other frameworks, such as Expectation Propagation \citep{minka2001family} and Variational Message Passing \citep{winn2005variational}, can be found in the original paper. 
\label{definition:dma}
\end{Definition}

\paragraph{Bayesian neural network}
We define a single-layer Bayesian neural network with a single nonlinear output unit, trainable parameters $\vtheta:=\mW$, an activation function $g(\cdot)$, and a basis feature vector function $\phi(\cdot)$. The input $\vx\in\mathbb{R}^d$ is mapped to the basis feature vector $\phi(\vx)\in\mathbb{R}^{P}$, where $P:=\dim\phi(\vx)$, and $\mW\in\mathbb{R}^{P}$. In our Bayesian formulation, the weights, preactivation, and network output are represented as Gaussian random variables, with
\begin{equation*}
W_k \sim \mathcal N\!\left(W_k;\cdot,\cdot \right), \quad
z(\vx) = \mW^\top\phi(\vx), \quad
f_U(\vx;\vtheta) = g\!\left(z(\vx)\right)
\end{equation*}

where if the input \(\vx\) is clear from context, we write \(z:=z(\vx)\). We denote the mean and variance of the current Gaussian belief for $W_k$ by $\mu_k$ and $\sigma_k^2$, respectively. Note that we define this shallow architecture as it is sufficient for our current experiments, and use the same architecture for all baselines to ensure a consistent comparison. For the one-dimensional spatiotemporal PDEs considered further, we write \(\vx=(s,t)\), where \(s\) denotes the spatial coordinate and \(t\) denotes time. For brevity, we sometimes write
\(f(\vx;\vtheta):=f_U(\vx;\vtheta)\) when no ambiguity arises.

\subsection{Differentiable activation function}
\label{subsection:activation}

The employed neural network must be differentiable up to the order of the differential equation being considered; i.e., for an $ n$-th-order differential equation, the required partial derivatives of the network output up to order $n$ must exist and be non-constant. Therefore, we have to choose the activation function $g(\cdot)$ and the feature map $\phi(\cdot)$ to be $n$-times differentiable. We provide more details on constructing $\phi(\cdot)$ in Appendix~\ref{appendix:basis}.

In previous work on PINNs, the hyperbolic tangent is commonly used as an activation function \citep{raissi2019physics} - it is infinitely differentiable and analytic on all of $\mathbb{R}$ and therefore satisfies the requirement for a PDE of order \(n\) that \(g\) be \(n\) times differentiable almost everywhere, except on a null set. However, our message passing framework introduces an additional condition on the activation function \(g\): we also require that for Gaussian \(X\) the moments of \(g^{(k)}(X)\) and, where defined, its inverse \(\left(g^{(k)}\right)^{-1}(X)\), $k \in \{0,..., n\}$ can be computed or accurately approximated; when the inverse is not uniquely defined, the corresponding backward message can instead be obtained through a suitable local approximation.
This condition allows the activation and its derivatives to be propagated through the factor graph via moment matching.

The exponential function is a natural choice: it is infinitely differentiable, and exponentiated Gaussian variables have closed-form moments, as they follow a log-normal distribution. To keep these properties while allowing the activation function to also take negative values, we define

\begin{equation}
\label{eq:activation}
    g(x) =
    \begin{cases}
        e^{\beta x}-1, & x>0, \\
        -\alpha\left(e^{-\beta x}-1\right), & x\le0,
    \end{cases}
\end{equation}
where \(0<\alpha<1\) and \(\beta>0\). The function is continuous and
piecewise infinitely differentiable; its only non-differentiable point is \(x=0\), which has zero probability under a non-degenerate Gaussian preactivation. The corresponding Gaussian message approximations are derived in Appendix~\ref{appendix:activation}, using Lemma~\ref{lemma:conditional_lognormal}.

Note that we do not claim any advantage of our message passing approach from the introduction of this activation function -- in all baseline comparisons, we use the same activation function. Rather, we introduce it as a smooth exponential analog of the leaky ReLU, suitable for our message passing approach. In particular, for inputs close to zero, its two exponential branches are approximately linear, $e^{\beta x}-1 \approx \beta x$ for $x>0$ and $-\alpha(e^{-\beta x}-1) \approx \alpha\beta x$ for $x\leq 0$, giving different slopes on the positive and negative sides as in the leaky ReLU.

\section{Our factor graph}
\label{section:approach}
For clarity, we split the factor graph into two subgraphs: the \textit{PDE subgraph} and the \textit{data subgraph}. The PDE graph enforces the PDE at collocation points $\mathcal{C}$ from the domain with a PDE factor $f_R$ (Section \ref{subsection:pde_graph}), while the data graph incorporates observed function values and updates weights with $f_U$ to learn $\mathcal{D}$ (Section \ref{subsection:data_graph}). In our PINN setting, these observations are given only at the initial condition, $t=0$. These subgraphs are shown in Figure \ref{fig:fg_single_layer}: the neural network parameters $W$ are shared between the two subgraphs and get updated on both sides. All deterministic relations between variables are modeled as Dirac-delta distributions, $\delta(\cdot)$. For a fixed neural network architecture, the data subgraph remains unchanged regardless of the PDE being solved; only the PDE factor $f_R$ in the PDE subgraph needs to be adapted to the specific PDE. In Section \ref{subsection:schedule}, we elaborate further on the schedule for message updates.


\begin{figure}
    \centering
\includegraphics[width=0.9\linewidth]{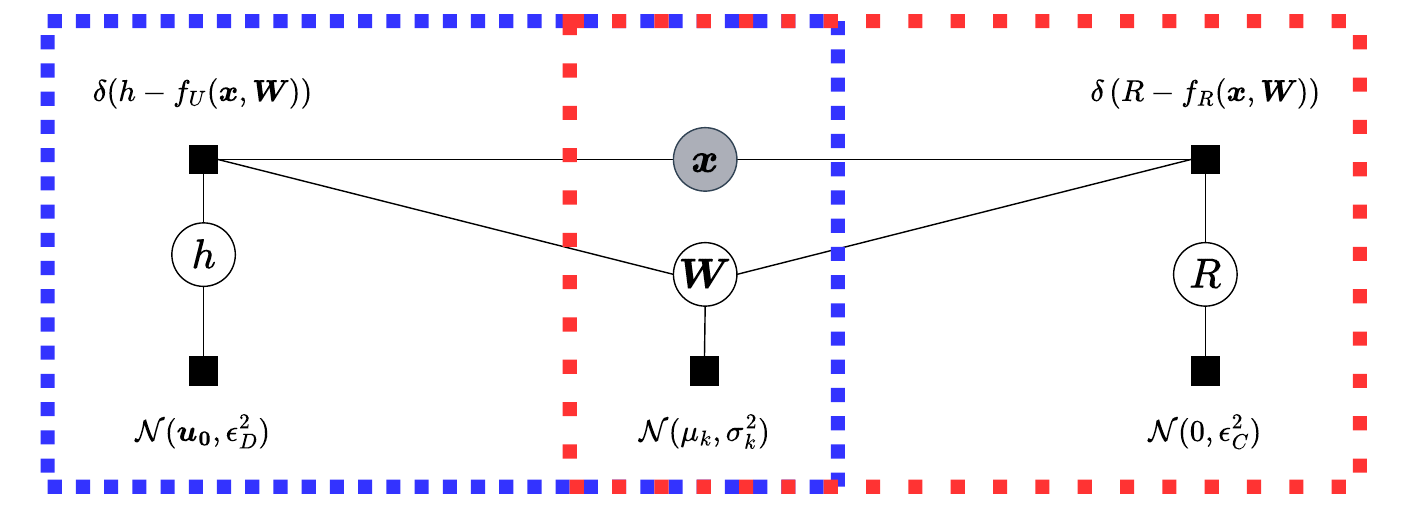}
    \caption{A generic factor graph for PDEs. White circle nodes refer to the hidden variables, the gray node to the observed data $\vx$, and black squares to the functions (factors). \textcolor{blue}{Left subgraph (blue rectangle)}: data subgraph, where $f_U(\vx, \mW)$ models a neural network that is trained to match the label data - only initial condition in our case. \textcolor{red}{Right subgraph (red rectangle)}: PDE subgraph, that enforces a PDE on a subset of collocation points $\mathcal{C}$ with $f_R$.}
    \label{fig:fg_single_layer}
\end{figure}

\subsection{PDE subgraph}
\label{subsection:pde_graph}

The PDE subgraph incorporates the governing equation through a
\textit{residual factor} \(f_R(\vx,\mW)\). At each collocation point \(\vx\in\mathcal C\), the residual $R(\vx,\mW)\sim\mathcal N\!\left(0,\epsilon_C^2\right))$ measures how well the network output satisfies the PDE and is constrained around zero.
Thus, the residual factor connects the PDE constraint at a collocation point to the weight variables. Its specific form is determined by the operator of the desired PDE, while the generic message-passing procedure remains unchanged. We do not enforce the PDE residual $R$ to be exactly zero; instead, we use
$R\sim\mathcal N\!\left(\cdot;0,\epsilon_C^2\right)$; an additional ablation study on the PDE residual prior in Appendix \ref{appendix:pde-residual} demonstrates that if we tighten the space too much, the solution can exhibit the failure mode.

\paragraph{Forward message}
Propagate the current weight distributions through the PDE operator and moment-match the resulting residual distribution,
\begin{equation*}
m_{f_R \rightarrow R}(R)
\approx
\mathcal N\!\left(
    R;\,
    \mathbb E_{\mW}[f_R(\mW,\vx)],\,
    \operatorname{Var}_{\mW}(f_R(\mW,\vx))\right).
\end{equation*}

\paragraph{Backward message}
Propagate the residual $R$ and all remaining
weights $\mW_{\setminus k}$ through the PDE operator 
and moment-match the resulting distribution,

\begin{equation*}
m_{f_R \rightarrow W_k}(W_k)
    \approx
    \mathcal N\!\left(
        W_k;\,
        \mathbb E_{R,\mW_{\setminus k}}[f_R(\mW,\vx)],\,
        \operatorname{Var}_{R,\mW_{\setminus k}}(f_R(\mW,\vx))
    \right). 
\end{equation*}
Note that in real equations, it is usually not feasible to solve this equation for $W_k$ in a closed form; for this reason, we use a first-order Taylor expansion around parameter means. As an illustration, the residual of the advection equation is

\begin{equation}
\label{eq:advection_residual}
c(\mW,\vx)
:=
\vb^\top\nabla_{\vx}f(\vx;\vtheta)
=
g'(z)\mW^\top
\sum_{i=1}^{d} b_i\frac{\partial\phi(\vx)}{\partial x_i}.
\end{equation}

Then $f_R(\mW,\vx) = \delta\left(R - c(\mW,\vx)\right)$ denotes the factor that connects the weight variables \(\{W_k\}_k\) and the residual variable \(R\), as shown in Figure \ref{fig:fg_single_layer}. The derivations for the forward $m_{W \rightarrow f_R}(\mW)$ and the backward $m_{f_R \rightarrow W}$ are shown in the Appendix \ref{appendix:advection}. The same principle can be reused for other PDEs once the corresponding residual factor and its approximate messages have been derived; Appendix~\ref{appendix:kpp} provides the Fisher-KPP equation as a second example with a different differential order and a nonlinear reaction term.
Therefore, adapting the framework to a different PDE requires modifying the residual factor, while the general message passing procedure remains unchanged. 

\subsection{Data subgraph}
\label{subsection:data_graph}
The data subgraph forces the neural network to learn the observed values from the initial condition
\((\vx_n,y_n)\in\mathcal D\) with a factor $f_U$. With our simple architecture, we can compute forward and backward messages only using known Gaussian properties and the approximations of the activation function moments, derived in Appendix~\ref{appendix:activation}.

\paragraph{Forward message}
Given the incoming messages to weights, we can compute the moments of $z_n:=\mW^\top\phi(\vx_n) \sim \mathcal N(\cdot;\mu_{z_n},\sigma_{z_n}^2)$ exactly. The activation moments are derived in Appendix~\ref{appendix:activation}, giving $f(\vx_n;\vtheta) \approx \mathcal N(\cdot;\mu_{h_n},\sigma_{h_n}^2)$.

\paragraph{Backward message}
To propagate the observation \(y_n\) back to the weights, we first
moment-match the nonlinear joint distribution of
\(z_n\) and \(h_n=g(z_n)\) with a Gaussian distribution. This constitutes
an additional Gaussian approximation in the backward update, while retaining
the dependence between \(z_n\) and \(h_n\) through their covariance.
Approximating the joint distribution of \(z_n\) and \(h_n\) by a Gaussian
with the corresponding moments and conditioning on the observation
\(h_n=y_n\) gives
\begin{equation*}
\begin{aligned}
    \mu_{z_n\mid y_n}
    &=
    \mu_{z_n}
    +
    \frac{\operatorname{Cov}(z_n,h_n)}
         {\sigma_{h_n}^2 + \epsilon_D^2}
    (y_n-\mu_{h_n}), \quad
    \sigma_{z_n\mid y_n}^2
    &=
    \sigma_{z_n}^2
    -
    \frac{\operatorname{Cov}(z_n,h_n)^2}
         {\sigma_{h_n}^2 + \epsilon_D^2},
\end{aligned}
\end{equation*}

where $\epsilon_D$ refers to the modeled noise in labels as shown in Figure \ref{fig:fg_single_layer} and the covariance can be evaluated using Stein's lemma, introduced in Equation~\ref{eq:stein-lemma}, $ \operatorname{Cov}(z_n,h_n) = \sigma_{z_n}^2\,\mathbb E[g'(z_n)]$.

Since
\(z_n=\sum_j\phi_j(\vx_n)W_j\), we isolate \(W_k\) and moment-match its
distribution:
\begin{equation}
\begin{aligned}
    W_k
    &=
    \frac{
        z_n-\sum_{j\neq k}\phi_j(\vx_n)W_j
    }{
        \phi_k(\vx_n)
    },\quad
    \mu_{W_k}
    &=
    \frac{
        \mu_{z_n\mid y_n}
        -\sum_{j\neq k}\phi_j(\vx_n)\mu_j
    }{
        \phi_k(\vx_n)
    },\\
    \sigma_{W_k}^2
    &=
    \frac{
        \sigma_{z_n\mid y_n}^2
        +\sum_{j\neq k}\phi_j(\vx_n)^2\sigma_j^2
    }{
        \phi_k(\vx_n)^2
    }.
\end{aligned}
\end{equation}
Thus, the backward message is approximated as  $m_{f_U\rightarrow W_k}(W_k) \approx \mathcal N(W_k;\mu_{W_k},\sigma_{W_k}^2)$.

\subsection{Message passing schedule}
\label{subsection:schedule}
Each subgraph alone can significantly narrow the solution space without satisfying the full model. As shown in Figure~\ref{fig:advection_spaces} (Appendix \ref{appendix:schedule}), the data graph learns the initial condition but not its time evolution, but already significantly narrows the solution space. The PDE graph alone just propagates the prior over time without conditioning on the observed solution. A fixed point of either subgraph satisfies only $p(\vtheta \mid\mathcal{D})$ or $p(\vtheta \mid \mathcal{C})$, rather than the joint posterior. As introduced in Section \ref{subsection:fg}, Gaussian messages combine additively in their natural parameters. We therefore interleave initial-condition and PDE updates, allowing both constraints to shape the posterior jointly and requiring empirically observed convergence to a fixed point of \(p(\vtheta\mid\mathcal{D},\mathcal{C})\) rather than either subgraph alone; refer to more details and an ablation study on this in Appendix \ref{appendix:schedule}.

\section{Evaluation}
\label{section:evaluation}
\begin{table}[t]
\centering
\small
\setlength{\tabcolsep}{4pt}
\begin{tabular}{l cccc cccc}
\toprule
 & \multicolumn{4}{c}{Advection} & \multicolumn{4}{c}{Fisher-KPP} \\
\cmidrule(lr){2-5} \cmidrule(lr){6-9}
 & Ours & VI & \textcolor[gray]{0.5}{HMC} & SGD & Ours & VI & \textcolor[gray]{0.5}{HMC} & SGD \\
\midrule
RMSE ($\times 10^{-2}$) & 4.89 & 19.5 & \textcolor[gray]{0.5}{4.85} & \textbf{4.86} & \textbf{2.72} & 72.7 & \textcolor[gray]{0.5}{2.73} & 13.7 \\
Cov.\ 90\,\% (\%) & \textbf{14.5} & 11.1 & \textcolor[gray]{0.5}{27.1} & -- & \textbf{50.0} & 0.0 & \textcolor[gray]{0.5}{68.3} & -- \\
CRPS ($\times 10^{-3}$) & \textbf{39.0} & 160 & \textcolor[gray]{0.5}{37.9} & 39.2 & \textbf{3.78} & 718 & \textcolor[gray]{0.5}{3.65} & 112 \\
\bottomrule
\end{tabular}
\caption{Predictive accuracy and uncertainty diagnostics against the exact solutions, aggregated over all runs. For the metric descriptions, refer to Appendix \ref{appendix:uncertainty-metrics}. \textbf{RMSE}: mean per-run root mean square error of the predictive mean. \textbf{Cov.\ 90\,\%}: fraction of grid points for which the 90\% predictive interval contains the exact solution. \textbf{CRPS}: continuous ranked probability score. HMC (grey) serves as the reference. Stochastic gradient descent (SGD) refers to the non-Bayesian PINN baseline (Section~\ref{subsection:solution-fidelity}); its CRPS reduces to the mean absolute error. Best results among our method, VI, and SGD are highlighted in bold.} 
\label{tab:main_results}
\end{table}
Our evaluation asks three questions: (1) whether message passing recovers accurate PDE solutions, (2) how closely its posterior marginal statistics and predictive uncertainty agree with a sampling-based HMC reference, and (3) what computational cost this requires relative to HMC and mean-field VI. We use HMC as a reference for approximate posterior inference rather than as ground truth.
We use the advection equation as a representative first-order PDE for quantitative comparison and uncertainty analysis; it has the form presented in Equation~\ref{eq:first_order_pde} with $\vb=(b,1)^\top$, and the residual function $f_R$ is defined in Equation~\ref{eq:advection_residual}. One step further, we also evaluate our approach on the Fisher-KPP equation, a semi-linear second-order PDE, whose residual is given in Equation~\ref{eq:fisher_residual}.
\paragraph{Experimental setup} We evaluate on two PDEBench~\citep{takamoto2022pdebench} datasets, which are generated by computing future states of the system from given initial conditions using known exact solutions of the PDEs. We take 1D advection across five wave speeds, $b\in\{0.1,0.4,1,2,7\}$, with ten trajectories per wave speed, and 1D reaction--diffusion across its full 16-setting parameter grid, $(\nu,\rho)\in\{0.5,1,2,5\}\times\{1,2,5,10\}$, with five trajectories per setting, all drawn from canonical seeded subsets. All the baselines we compare to share the same architecture, feature basis, labeled and collocation points, initialization, and random seeds as our model.  All experiments, including the timing measurements, ran on a single CPU-only machine; more details on the experimental setup are provided in Appendix~\ref{appendix:experimental_details}.

\subsection{Solution fidelity}
\label{subsection:solution-fidelity}

\paragraph{Message passing compared to exact solutions}
We first compare our predictions against exact solutions, i.e., those generated by PDEBench datasets. Figure~\ref{fig:advection_pdebench} shows the predictions for one of the advection trajectories in detail: across the evaluated wave speeds, the posterior mean follows the correct pattern of the ground-truth solution. At the same time, the posterior variance exhibits a structured dependence on the prediction horizon: it is lowest near the observed initial condition and generally increases as the solution is propagated further into the future. Figure~\ref{fig:uq}(c) shows this pattern across the evaluated advection speeds, and related behavior is observed for Fisher-KPP in Figures~\ref{fig:fisher-pdebench} and \ref{fig:fisher-simple}. We interpret this as evidence that the inferred uncertainty responds to the available information and PDE dynamics.

\paragraph{Message passing compared to a baseline}
As a point-estimate reference, we adopt a non-Bayesian PINN baseline from \cite{takamoto2022pdebench} and train it with Adam using the standard PINN loss (SGD baseline).
For all investigated trajectories, we observe that our approach achieves performance very close to that of the baseline model: Figure~\ref{fig:mp_vs_pdebench_scatter_grid} for the advection and Figure \ref{fig:mp_vs_pdebench_scatter_grid_fisher} for the Fisher-KPP show that the mean squared error (MSE) and the PDE residual $R$ are almost identical, while for the Fisher-KPP, our approach is even slightly better.
In other words, this demonstrates that, given a nearly identical setup, our message passing training procedure yields results very close to those of a classical gradient-based training procedure.
Table~\ref{tab:main_results} summarizes the predictive results: our root mean square error (RMSE) is close to HMC and the baseline referred to as stochastic gradient descent (SGD) on advection and to HMC on Fisher-KPP, although 90\,\% coverage remains below nominal.

\begin{figure}[!htb]
    \centering
    \includegraphics[
    width=1.\linewidth
]{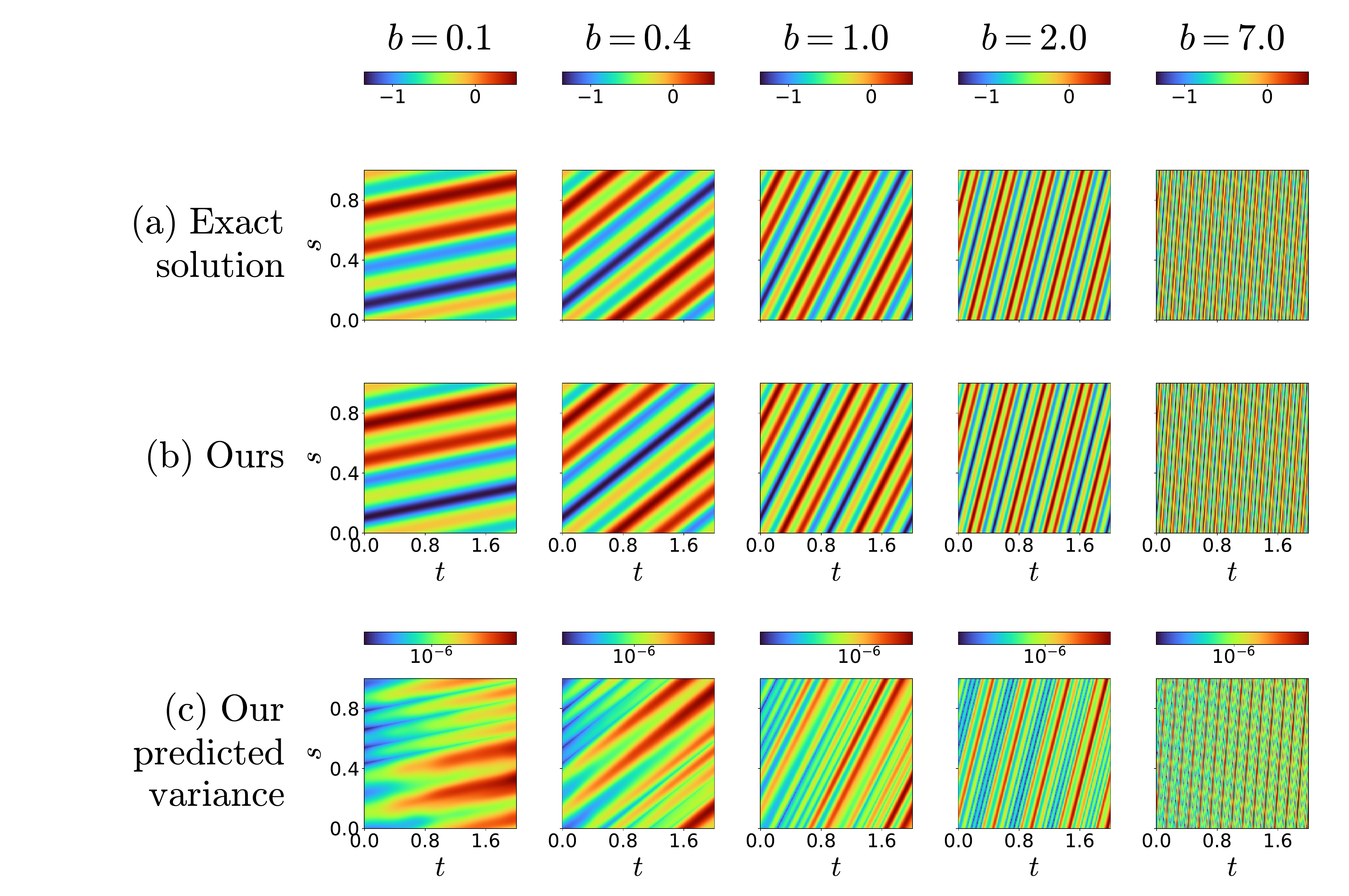}
    \caption{Comparison of the message passing approach (ours) and the baseline solution from PDEBench for the advection equation across different wave speeds, $b\in\{0.1, 0.4, 1, 2, 7\}$. All heatmaps are plotted in the coordinates of $\vx=(s, t)$, where $s$ denotes space and $t$ denotes time. \textbf{(a)} the exact solution; \textbf{(b)} predicted mean values of our message passing solution, which preserve the propagation pattern across the evaluated wave speeds; \textbf{(c)} predicted variance values of our message passing solution, showing increasing uncertainty further in time from the observed initial condition.}
    \label{fig:advection_pdebench}
\end{figure}

\subsection{Bayesian comparison}
\label{subsection:bayesian_comparison}
We compare our approach to HMC and VI in terms of the similarity of the inferred parameter posteriors and the training and inference time costs. 
To compare the predicted uncertainty, we use the continuous ranked probability score (CRPS) and 90\,\% coverage (see Appendix \ref{appendix:uncertainty-metrics}).
Table~\ref{tab:main_results} shows CRPS close to HMC for both PDEs, with lower 90\,\% coverage, while VI is markedly worse.
\paragraph{Message passing compared to Hamiltonian Monte Carlo}
For the HMC baseline, we use Algorithm 1 from \cite{yang2021b}, with a Metropolis-Hastings acceptance step and Gaussian likelihoods on the observed initial condition and the PDE residual. To ensure a fair comparison, we reimplemented HMC in Julia using the same architecture, priors, and random seeds as our message passing approach.
Using HMC as a sampling-based reference, we find close agreement in posterior marginal means, as shown in Figures~\ref{fig:bayesian_comparison}(a) and \ref{fig:bayesian_comparison_fisher}(a). The marginal standard deviations do not exactly match, but generally follow the same structure and are substantially closer to the HMC reference than those obtained with the mean-field VI baseline; Figures~\ref{fig:uq} and \ref{fig:uq_fisher} quantify the remaining differences in predictive uncertainty. These comparisons concern marginal first- and second-order statistics and do not establish recovery of the full joint HMC posterior.
In the evaluated models, message passing is substantially cheaper computationally than HMC because it avoids posterior sampling (Figures~\ref{fig:bayesian_comparison_time} and \ref{fig:bayesian_comparison_time_fisher}).
\paragraph{Message passing compared to variational inference} 
For the VI baseline, we use Algorithm 2 from \cite{yang2021b}; again, we reimplement it in Julia to enable a fair comparison. Figure~\ref{fig:bayesian_comparison_ours_vi} for the advection and Figure \ref{fig:bayesian_comparison_ours_vi_fisher} for the Fisher-KPP show that our approach still infers similar posterior means, but for posterior variances, VI yields different estimates from both HMC and our approach -- Figures ~\ref{fig:bayesian_comparison_vi_hmc} and \ref{fig:bayesian_comparison_vi_hmc_fisher} additionally illustrate the difference between HMC and VI for both equations, respectively. In general, VI is known to often underestimate posterior variances \citep{blei2017variational}; therefore, we regard the comparison with HMC as the more reliable reference. The computational efficiency of our approach compared to VI is shown in Figure~\ref{fig:bayesian_comparison_time} for the advection and in Figure \ref{fig:bayesian_comparison_time_fisher} for the Fisher-KPP: our training is faster at every wave speed, while inference is equally fast, since both methods compute the predictive distribution in closed form.
\begin{figure}
    \centering
    \includegraphics[width=.9\linewidth]{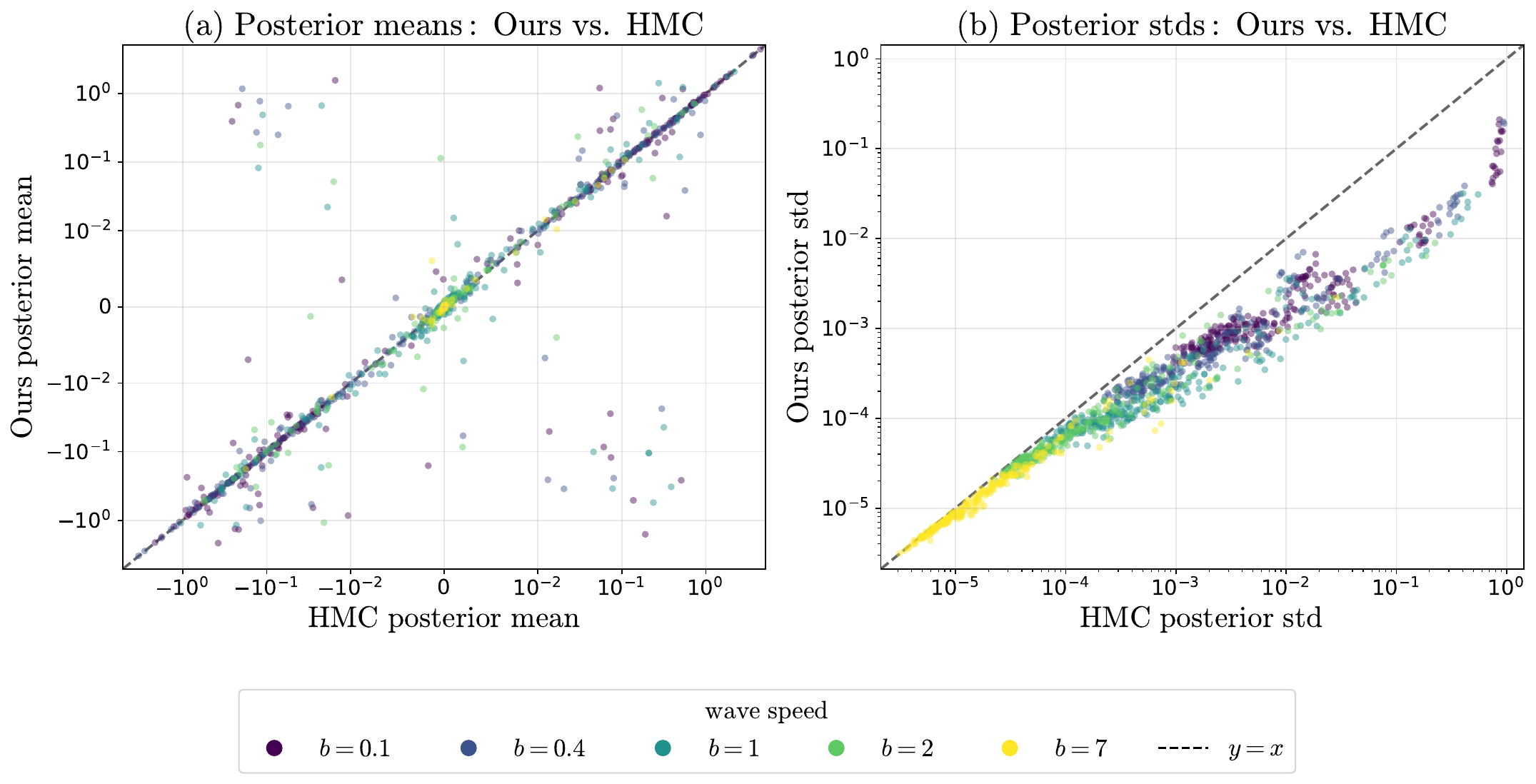}
    \caption{Comparison of our approach (ours) to the HMC implemented based on \cite{yang2021b} for the advection equation. We map posterior weights of HMC (x-axis) to our approach (y-axis) - points in \textbf{(a)} refer to the mean values, points in \textbf{(b)} refer to the standard deviations; our approximate approach converges to posterior weights close to the ones from HMC. For readability, 25 random weights per trajectory are shown.}
    \label{fig:bayesian_comparison}
\end{figure}
\begin{figure}
    \centering
    \includegraphics[width=1.\linewidth]{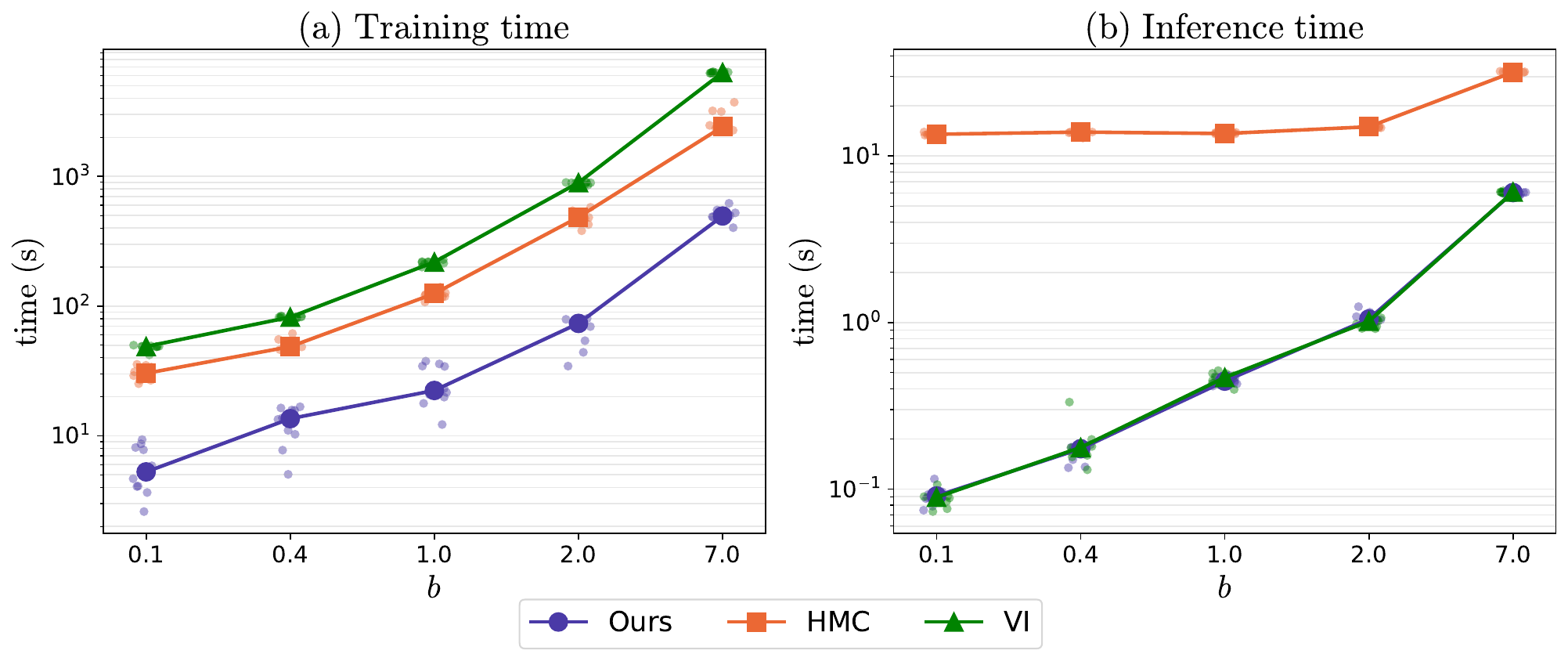}
    \caption{Computational cost of our approach (ours), HMC and VI for the advection equation across wave speeds $b$, ten trajectories each (points: trajectories; lines: medians). \textbf{(a)} Training time, cold start; \textbf{(b)} inference time of the predictive mean and variance on the full grid, steady state. Our training is up to $6.6\times$ faster than HMC and up to $13\times$ faster than VI; our closed-form inference is as fast as VI's and up to $150\times$ faster than HMC's sampling-based inference.}
    \label{fig:bayesian_comparison_time}
\end{figure}
\section{Related work}
\label{section:related-work}

\paragraph{Sampling and variational inference}
The foundational Bayesian approach, Bayesian PINN (B-PINN; \citealt{yang2021b}), models neural network parameters as random variables and infers their posterior using either Hamiltonian Monte Carlo (HMC) or variational inference (VI). HMC is shown to provide more accurate posterior estimates; however, it requires computationally expensive posterior sampling. Moreover, VI learns an approximate posterior using stochastic gradient-based optimization; this is where our approach differs: we propose a more natural
probabilistic solution based on message passing, in which inference is performed by directly propagating distributions through the model. Another line of work aims to improve HMC performance through a randomized loss function \citep{zong2025randomized}, but still relies on extensive posterior sampling. \citep{raj2025deep} proposes a more advanced VI approach, where a neural operator parameterizes the variational posterior and the corresponding evidence lower bound (ELBO)
incorporates both data and PDE residual terms, but inference still relies on gradient-based optimization and Monte Carlo sampling.\\
\paragraph{Physics-informed priors}
In contrast to approaches that incorporate the PDE directly during posterior inference, the work of \citep{menicali2025bayesian} uses physics to construct the Bayesian neural network (BNN) prior. The prior means are first obtained by optimizing a PINN objective that includes the PDE, initial conditions, and boundary conditions. The resulting physics-informed prior is then updated with the observed data via Bayes by Backprop \citep{blundell2015weight}, which performs variational inference via stochastic gradient-based optimization.\\
\paragraph{Gaussian processes}
A different line of work replaces the BNN with a Gaussian process (GP), placing a probability distribution directly over the PDE solution rather than over neural network parameters. Since derivatives of a GP are again jointly Gaussian under suitable kernels, the solution and its derivatives can be modeled jointly, which provides a natural way to incorporate differential constraints. \citep{paun2025physics} constructs such a physics-informed GP by conditioning on the PDE at a finite set of points and performs posterior inference using HMC.
\citep{chkrebtii2016bayesian} jointly models the solution and its derivatives with a GP and sequentially updates their distributions using local
differential-equation evaluations obtained from sampled states.
\section{Conclusion}

\paragraph{Summary}
Factor graphs allow different sources of prior knowledge to be incorporated directly into the probabilistic model through separate constraints.
Building on this property, this work proposes a novel perspective on solving PDEs without sampling or gradient-descent optimization: formulating them as factor graphs for Bayesian neural networks that encode prior physical knowledge. Inference is performed via message passing, and because both subgraphs contain deterministic relations represented by Dirac delta factors, DMA is a suitable message passing framework for our factor graph.
Our evaluation of the advection and Fisher-KPP equations demonstrates the feasibility of our approach, and the Bayesian comparison shows that it recovers posterior marginal means close to the HMC reference at substantially lower computational cost, while producing uncertainty estimates that are closer in structure to HMC than the evaluated mean-field VI baseline.

\paragraph{Discussion}
This work provides a first demonstration of Bayesian PDE learning through message passing on a factor graph. We deliberately focus on a shallow Bayesian neural network with a single nonlinear output unit and two representative PDEs. Within this controlled setting, we show that the same inference framework can recover PDE solutions and structured predictive uncertainty across equations with different differential and nonlinear structures. Our goal at this stage is to establish the feasibility of this new inference perspective rather than its full scalability. Extending the framework to higher-dimensional PDEs and deeper neural networks is therefore a promising direction for future work.
\subsection*{AI use statement}

In this work, we used generative AI tools for editorial assistance and limited programming assistance.
We have not used generative AI tools for generating research ideas, developing the proposed methodology, deriving results, conducting the literature review, interpreting experimental findings, or drawing conclusions.
We have reviewed all AI-assisted work by reviewing the AI-assisted code implementations. 
We take responsibility for the final content of this work,
including text, claims or artifacts produced with the aid of generative AI.

\subsection*{Ethics statement}

This work does not raise any specific ethical concerns.

\subsection*{Reproducibility statement}

We provide all experimental details in Appendix~\ref{appendix:experimental_details}, including the hardware configuration, hyperparameters, and other implementation details. The reference implementation is available in the anonymous repository:
\url{https://anonymous.4open.science/r/mp-pinn-release-5CFF/README.md}.
It includes scripts for generating the experimental data and reproducing the paper's results.



\newpage

\bibliography{main_bibliography}
\bibliographystyle{iclr2027_conference}

\newpage
\appendix
\section{Appendix}

\subsection{Details on the message passing algorithm}
\label{appendix:message-passing}
This appendix provides the details needed to reproduce the inference procedure introduced in Section~\ref{subsection:fg}. Appendix~\ref{appendix:sum-product} summarizes the sum-product algorithm~\citep{kschischang2001factor} and the Gaussian natural parameter arithmetic used by our implementation. Appendix~\ref{appendix:schedule} then specifies the randomized update schedule used on the PDE factor graph from Section~\ref{section:approach}.

\subsubsection{Sum-product algorithm}
\label{appendix:sum-product}

The sum-product algorithm performs inference by exchanging local messages between variable and factor nodes on a factor graph. Intuitively, a variable-to-factor message summarizes the information about a variable provided by all its neighboring factors except the receiving factor, while a factor-to-variable message combines the factor with the incoming information from all other variables connected to it. The marginal of a variable is then computed by combining all messages incoming to that variable. More specifically, given a set of random variables $ \mathbf{X}=(X_1,\ldots,X_K)$ and a set of factors $\{f_i\}_{i=1}^N$, marginals and messages follow the following rules:
\begin{subequations}
\label{eq:sum-product}
\begin{align}
p(\boldsymbol\xi)
&\propto
\prod_{i=1}^{N}
f_i\!\left(\boldsymbol\xi_{\operatorname{ne}(f_i)}\right),
\\
m_{X_k\rightarrow f_i}(x_k)
&\propto
\prod_{f_j\in\operatorname{ne}(X_k)\setminus\{f_i\}}
m_{f_j\rightarrow X_k}(x_k),
\\
m_{f_i\rightarrow X_k}(x_k)
&\propto
\int
f_i\!\left(\boldsymbol\xi_{\operatorname{ne}(f_i)}\right)
\prod_{X_j\in\operatorname{ne}(f_i)\setminus\{X_k\}}
m_{X_j\rightarrow f_i}(x_j)
\,d\boldsymbol\xi_{\operatorname{ne}(f_i)\setminus\{X_k\}},
\end{align}
\end{subequations}

where $m_{X_k\rightarrow f_i}(x_k)$ is a message from a variable node $X_k$ to a factor node $f_i$ and $m_{f_i\rightarrow X_k}(x_k)$ is a message from a factor node $f_i$ to a variable node $X_k$.

In our implementation, all Gaussian messages and marginals are represented in terms of their natural parameters. For a Gaussian density with mean $\mu$ and variance $\sigma^2$, we define the precision as $\kappa := \frac{1}{\sigma^2}$ and the precision-weighted mean as $\tau:= \frac{\mu}{\sigma^2}$, and denote its natural parameterization by $\mathcal G(\cdot;\tau,\kappa)$. Consider two Gaussian densities over the same variable $x$,
\[
p(x)
=
\mathcal G(x;\tau_p,\kappa_p),
\qquad
q(x)
=
\mathcal G(x;\tau_q,\kappa_q).
\]
Then, the multiplication of Gaussian densities corresponds to the addition of their natural parameters,
\begin{equation*}
p(x)q(x)
\propto
\mathcal G(x;\tau_p+\tau_q,\kappa_p+\kappa_q),
\end{equation*}
while division corresponds to subtraction,
\begin{equation*}
\frac{p(x)}
     {q(x)}
\propto
\mathcal G(x;\tau_p-\tau_q,\kappa_p-\kappa_q),
\end{equation*}
provided that the resulting precision is positive.

We use these properties for each message update in the employed sum-product algorithm. Besides the convenience, these operations of summation and subtraction are also less computationally expensive than multiplication or division that classic parametrization would require.

\subsubsection{Randomized message passing schedule}
\label{appendix:schedule}

Algorithm~\ref{alg:message_passing} summarizes the randomized message passing
schedule for the PDE factor graph defined in Section~\ref{subsection:pde_graph}. The updates follow the sum-product rules introduced in Section~\ref{subsection:fg} and summarized in Equation~\ref{eq:sum-product}. At the beginning of each epoch, all data and PDE residual factors are placed in a randomized order. Each factor update consists of three stages: (1) a forward update that propagates the incoming weight messages through the factor to obtain an approximate Gaussian message for the network output \(F\) or PDE residual \(R\); (2) an observation update that combines this message with the data observation or zero-residual constraint; and (3) a backward update that propagates the resulting information to the weights and replaces the factor's previously cached messages.

The randomized schedule is motivated in Section~\ref{subsection:schedule}. The belief of each weight \(W_k\) combines the prior, data, and PDE messages as
\begin{equation*}
\begin{aligned}
q(W_k)
&\propto
q^{(0)}(W_k)\,
q^{\mathcal D}(W_k)\,
q^{\mathcal C}(W_k)
\\
&\propto
\exp\left\{
-\frac{1}{2}
\left(
\kappa_k^{(0)}
+\kappa_k^{\mathcal D}
+\kappa_k^{\mathcal C}
\right)W_k^2
+
\left(
\tau_k^{(0)}
+\tau_k^{\mathcal D}
+\tau_k^{\mathcal C}
\right)W_k
\right\},
\end{aligned}
\end{equation*}
where $(\kappa_k^{(0)},\tau_k^{(0)})$, 
$(\kappa_k^{\mathcal D},\tau_k^{\mathcal D})$, and 
$(\kappa_k^{\mathcal C},\tau_k^{\mathcal C})$ denote the natural parameters 
contributed by the prior, data subgraph, and PDE subgraph, respectively. Updating either subgraph separately may lead to a fixed point that satisfies only the data constraints or only the PDE constraints, while substantially narrowing the weight beliefs before the other subgraph is considered. Interleaving individual data and PDE-residual factors allows both sources of information to influence the
beliefs throughout inference. Our ablation study illustrates this behavior in Figure~\ref{fig:advection_spaces}:
converging the data subgraph separately produces overly concentrated intermediate beliefs, whereas the interleaved schedule empirically reaches a fixed point that reflects both the observed data and the PDE constraints. This observation is empirical and does not constitute a general convergence
guarantee for loopy message passing.

\begin{algorithm}[t]
\caption{Randomized message passing schedule on the PDE factor graph}
\label{alg:message_passing}
\begin{algorithmic}[1]

\Require Factors
$\mathcal F=\mathcal F_{\mathcal D}\cup\mathcal F_{\mathcal C}$,
prior beliefs $q^{(0)}(W_k)$, maximum number of epochs
$E_{\max}$ (default: $300$), and convergence tolerance
$\delta$ (default: $10^{-5}$)

\State Initialize every cached factor-to-weight message with zero natural
parameters:
\Statex \hspace{\algorithmicindent}
$\bigl(\kappa_{i\rightarrow k},\tau_{i\rightarrow k}\bigr)
\gets(0,0)$
for every $f_i\in\mathcal F$ and
$W_k\in\operatorname{ne}(f_i)$

\State Initialize the weight beliefs:
$q(W_k)\gets q^{(0)}(W_k)$ for every $W_k$

\For{$e=1,\ldots,E_{\max}$}
    \State Draw a fresh random permutation $\pi_e$ of $\mathcal F$
    \State $\Delta_e\gets0$

    \For{$f_i$ in the order $\pi_e$}

        \For{$W_k\in\operatorname{ne}(f_i)$}
            \State Extract the factor-excluded incoming message as given in Equation \ref{eq:sum-product}:
            \Statex \hspace{\algorithmicindent}
            $\displaystyle
            m_{W_k\rightarrow f_i}(W_k)
            \propto
            \frac{q(W_k)}
                 {m_{f_i\rightarrow W_k}(W_k)}$
        \EndFor

        \State Set the forward inputs to
        $\bar m_{ik}(W_k)\gets m_{W_k\rightarrow f_i}(W_k)$
        for every $W_k\in\operatorname{ne}(f_i)$

        \State \textbf{(1) Forward update:} propagate
        $\left\{
        \bar m_{ik}
        \right\}_{W_k\in\operatorname{ne}(f_i)}$
        through $f_i$ and moment-match the resulting output message
        $m_{f_i\rightarrow F_i}$ if
        $f_i\in\mathcal F_{\mathcal D}$, or
        $m_{f_i\rightarrow R_i}$ if
        $f_i\in\mathcal F_{\mathcal C}$

        \State \textbf{(2) Observation update:} combine the forward
        message with the corresponding Gaussian observation -- observed data point $y_i$ for a data factor of zero-mean residual for a PDE residual factor:
        \Statex \hspace{\algorithmicindent}
        $\displaystyle
        q(F_i)\propto
        m_{f_i\rightarrow F_i}(F_i)\,
        \mathcal N
        \!\left(F_i;y_i,\epsilon_D^{2}\right),
        \qquad f_i\in\mathcal F_{\mathcal D},$
        \Statex \hspace{\algorithmicindent}
        $\displaystyle
        q(R_i)\propto
        m_{f_i\rightarrow R_i}(R_i)\,
        \mathcal N
        \!\left(R_i;0,\epsilon_C^{2}\right),
        \qquad f_i\in\mathcal F_{\mathcal C}.$

        \State \textbf{(3) Backward update:} use the updated output
        belief $q(F_i)$ or $q(R_i)$ to compute the Gaussian messages
        $\widetilde m_{f_i\rightarrow W_k}$ for every
        $W_k\in\operatorname{ne}(f_i)$

        \For{$W_k\in\operatorname{ne}(f_i)$}
            \State $q_{\mathrm{old}}(W_k)\gets q(W_k)$

            \State Replace the cached message and update the weight belief:
            \Statex \hspace{\algorithmicindent}
            $\displaystyle
            q(W_k)\gets
            m_{W_k\rightarrow f_i}(W_k)\,
            \widetilde m_{f_i\rightarrow W_k}(W_k)$

            \State
            $m_{f_i\rightarrow W_k}
            \gets
            \widetilde m_{f_i\rightarrow W_k}$

            \State Update the maximum belief change:
            \Statex \hspace{\algorithmicindent}
            $\displaystyle
            \Delta_e\gets
            \max\!\left\{
                \Delta_e,\,
                D_{\mathrm{KL}}\!\left(
                    q_{\mathrm{old}}(W_k)
                    \,\|\,q(W_k)
                \right)
            \right\}$
        \EndFor
    \EndFor

    \If{$\Delta_e<\delta$}
        \State \textbf{break}
    \EndIf
\EndFor

\State \Return $\left\{q(W_k)\right\}_k$
\end{algorithmic}
\end{algorithm}

\begin{figure}
    \centering
    \includegraphics[width=0.7\linewidth]{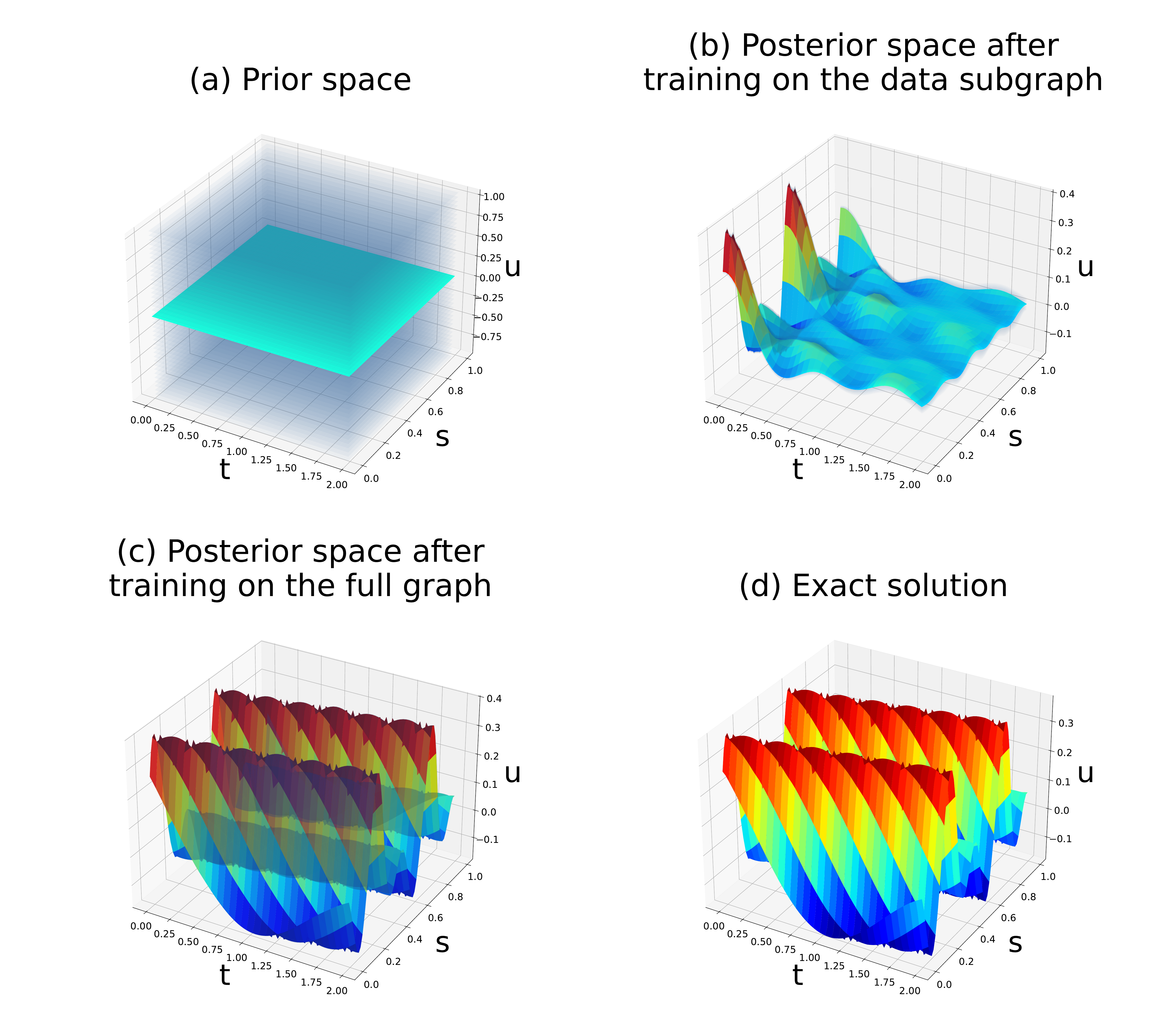}
    \caption{Solution spaces for the advection equation ($b=0.1$) for prior and posterior parameters shown as predictive mean (colorful) $\pm\sigma$ (gray shade). \textbf{(a)} prior space before any training; \textbf{(b)} posterior space with parameters trained only on initial condition - the function fits perfectly for $t=0$, but does not learn any time dependencies and already narrows the space too much; \textbf{(c)} posterior space with parameters trained with a mixed schedule of initial condition data passes and PDE passes - the function learns both spatial and time patterns correctly; \textbf{(d)} exact solution of the PDE.}
    \label{fig:advection_spaces}
\end{figure}

\subsection{Theoretical properties}
\label{appendix:theory}

Our target distribution is the Bayesian posterior in Equation~\ref{eq:bayesian_posterior}.
For a nonlinear neural network, the messages induced by the data and PDE
factors are generally non-Gaussian; therefore, posteriors computed by our algorithm are not exact. Instead, each message is moment-matched to a Gaussian, producing \textit{an approximate belief}
\(q(\vtheta)\). After this convergence, these beliefs are not guaranteed to match the exact posterior, i.e., $q^\star(\vtheta) \not\equiv
p(\vtheta\mid\mathcal D,\mathcal C)$ - the equality holds only if all factors produce only Gaussians, and messages therefore can be computed exactly.
We therefore evaluate the resulting predictive mean and uncertainty empirically; refer to Section~\ref{subsection:bayesian_comparison} for this evaluation. We also empirically observe that interleaving data and PDE updates is important for obtaining solutions that satisfy both sources of information; this behavior is not claimed as a convergence guarantee.

\subsection{Differentiable and decorrelated basis}
\label{appendix:basis}

Inspired by Fourier series, for each coordinate \(x_i\), \(i=1,\ldots,d\),
we define
\begin{equation*}
\psi_i(x_i)
=
\Bigl[
1,\;
\bigl(
\sin(2\pi kx_i),
\cos(2\pi kx_i)
\bigr)_{k=1}^{K_i}
\Bigr],
\end{equation*}
where \(K_i\) determines the number of frequencies. We combine these coordinate-wise bases using the Kronecker product, $\phi(\vx) = \bigotimes_{i=1}^{d}\psi_i(x_i)$, which produces all combinations of the coordinate-wise basis functions.

In the factor graph, the basis enters inference only through the factors corresponding to the data observations and PDE constraints. If different basis components behave similarly at the data and PDE points, their corresponding weights are not individually well-identified, and hence the posterior exhibits strong correlations among these weights. These correlations could be represented explicitly by maintaining a full multivariate Gaussian distribution over the weights; however, this would substantially increase the algorithm's cost. Instead, inspired by the principal component analysis (PCA) decorrelation transform \citep{kessy2018optimal}, we \textit{rotate the basis} so that components with similar effects are replaced by new components that more distinctly affect the constraints, reducing correlations between their corresponding weights.
We evaluate the basis components entering the data and PDE constraints as
for advection, with $\vr_{\mathrm{adv}}(\vx):=\sum_{i=1}^{d} b_i\,\partial_{x_i}\phi(\vx)$,
\begin{equation*}
\mA_{\mathcal D}
=
\begin{bmatrix}
\phi(\vx_1)^\top\\
\vdots\\
\phi(\vx_N)^\top
\end{bmatrix},
\qquad
\mA_{\mathcal C}
=
\begin{bmatrix}
\vr_{\mathrm{adv}}(\vx_1)^\top\\
\vdots\\
\vr_{\mathrm{adv}}(\vx_M)^\top
\end{bmatrix}.
\end{equation*}

We aim to evaluate the similarities of the basis components. We construct a weighted Gram matrix that contains non-normalized cosine similarities between the components:
\begin{equation*}
\mG
=
\frac{1}{\epsilon_D^2}
\mA_{\mathcal D}^{\top}\mA_{\mathcal D}
+
\frac{1}{\epsilon_C^2}
\mA_{\mathcal C}^{\top}\mA_{\mathcal C}.
\end{equation*}

We compute the eigenvalue decomposition of this symmetric matrix as  $\mG=\mV\mLambda\mV^\top$
The eigenvectors define a rotation of the original basis, for which $\mV^\top\mG\mV=\mLambda$.
Since \(\mLambda\) is diagonal, the cosine similarity between any two distinct rotated components is zero.
Let \(\widetilde{\mV}\) contain the selected eigenvectors, then we
redefine the basis as $\widetilde{\phi}(\vx) = \widetilde{\mV}^{\top}\phi(\vx)$ and, correspondingly, $\widetilde{\vr}_{\mathrm{adv}}(\vx) = \widetilde{\mV}^{\top}\vr_{\mathrm{adv}}(\vx)$.
The corresponding modification for the nonlinear Fisher-KPP residual is given in Appendix~\ref{appendix:basis_fisher}.
\subsubsection{Adaptation to the Fisher-KPP equation}
\label{appendix:basis_fisher}

For the Fisher-KPP equation (Appendix~\ref{appendix:kpp}), the data matrix
$\mA_{\mathcal D}$ is unchanged. Its residual, however, is nonlinear in the weights: the
reaction term $-\rho f(1-f)$ and the quadratic term arising from the chain rule
(Appendix~\ref{appendix:kpp}) define no weight-independent basis components. We therefore
build $\mA_{\mathcal C}$ from the linear part of the operator,
\begin{equation*}
\vr(\vx)
=
\frac{\partial\phi(\vx)}{\partial t}
-
\nu\,\frac{\partial^{2}\phi(\vx)}{\partial s^{2}},
\end{equation*}
which dominates the correlations the rotation is meant to remove: the diffusion term acts on
spatial frequency $k$ as $\nu(2\pi k)^{2}$, while the reaction term is bounded. These
linearized rows enter only $\mG$, i.e., only the choice of basis; the residual function $f_R$ itself
uses the full nonlinear expression of Equation~\ref{eq:fisher_residual} in the rotated basis
$\widetilde{\phi}$.


%
\subsection{Derivations for the activation function}
\label{appendix:activation}

This section derives the forward and backward messages for the activation function $g(\cdot)$ in Equation~\ref{eq:activation} and for its derivative. Because the activation is piecewise exponential, the required moments reduce to conditional log-normal moments. Lemma~\ref{lemma:conditional_lognormal} gives these moments. Appendix~\ref{appendix:activation-messages} applies them to $g$, while Appendix~\ref{appendix:activation-derivative-messages} treats $g'$.

\begin{Lemma}
\label{lemma:conditional_lognormal}
Conditional moments of the lognormal distribution can be computed exactly.
Let \(X\sim\mathcal{N}(\cdot;\mu_X,\sigma_X^2)\). Then, for any
\(\lambda\in\mathbb{R}\),
\begin{equation*}
\begin{aligned}
\mathbb{E}\!\left[
e^{\lambda X}\mathbf{1}_{\{X>0\}}
\right]
&=
e^{\lambda\mu_X+\frac12\lambda^2\sigma_X^2}
\Phi\!\left(
\frac{\mu_X+\lambda\sigma_X^2}{\sigma_X}
\right),
\\
\mathbb{E}\!\left[
e^{\lambda X}\mathbf{1}_{\{X\le0\}}
\right]
&=
e^{\lambda\mu_X+\frac12\lambda^2\sigma_X^2}
\Phi\!\left(
-\frac{\mu_X+\lambda\sigma_X^2}{\sigma_X}
\right).
\end{aligned}
\end{equation*}
\end{Lemma}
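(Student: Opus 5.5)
The plan is to compute the truncated expectation directly as a Gaussian integral and complete the square in the exponent. Assume \(\sigma_X>0\), the non-degenerate case used throughout the paper. Write
\[
\mathbb{E}\!\left[e^{\lambda X}\mathbf{1}_{\{X>0\}}\right]
=\int_0^\infty e^{\lambda x}\,\frac{1}{\sqrt{2\pi}\,\sigma_X}\exp\!\left(-\frac{(x-\mu_X)^2}{2\sigma_X^2}\right)dx .
\]
The first step is the identity
\[
\lambda x-\frac{(x-\mu_X)^2}{2\sigma_X^2}
=-\frac{\bigl(x-(\mu_X+\lambda\sigma_X^2)\bigr)^2}{2\sigma_X^2}+\lambda\mu_X+\tfrac12\lambda^2\sigma_X^2 .
\]
It follows by expanding both sides and comparing the coefficients of \(x^2\), \(x\) and the constant. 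This is the only genuine computation in the proof.

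With this identity, the constant factor \(e^{\lambda\mu_X+\frac12\lambda^2\sigma_X^2}\) comes out of the integral. What remains is the density of \(Y\sim\mathcal N(\mu_X+\lambda\sigma_X^2,\sigma_X^2)\) integrated over \((0,\infty)\), which equals \(\mathbb P(Y>0)\). Standardizing gives
\[
\mathbb P(Y>0)=1-\Phi\!\left(-\frac{\mu_X+\lambda\sigma_X^2}{\sigma_X}\right)=\Phi\!\left(\frac{\mu_X+\lambda\sigma_X^2}{\sigma_X}\right),
\]
using the symmetry \(1-\Phi(-a)=\Phi(a)\). This is the first formula.

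For the second formula, the same change of measure over \((-\infty,0]\) gives \(\mathbb P(Y\le 0)=\Phi\!\left(-(\mu_X+\lambda\sigma_X^2)/\sigma_X\right)\). Equivalently, one can subtract the first formula from the full lognormal moment generating function \(\mathbb E[e^{\lambda X}]=e^{\lambda\mu_X+\frac12\lambda^2\sigma_X^2}\). Since the point \(x=0\) has measure zero, it makes no difference whether the boundary is included.

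I do not expect a real obstacle here. The only points needing care are the sign bookkeeping in the completed square and stating explicitly that \(\sigma_X>0\), which the paper already assumes through its non-degenerate Gaussian preactivations. The formula holds for every \(\lambda\in\mathbb R\), including the negative values \(\lambda=-\beta\) and \(\lambda=-2\beta\) needed for the \(x\le0\) branch of \(g\), because the integral converges for all real \(\lambda\).
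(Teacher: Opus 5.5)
Your proposal is correct and follows essentially the same route as the paper's proof: write each truncated expectation as a Gaussian integral over $(0,\infty)$ or $(-\infty,0]$, complete the square to pull out $e^{\lambda\mu_X+\frac12\lambda^2\sigma_X^2}$, and read off the remaining integral as a normal CDF. Your explicit assumption $\sigma_X>0$, the symmetry step $1-\Phi(-a)=\Phi(a)$, and the alternative of subtracting the first formula from the full moment generating function are small clarifications that the paper leaves implicit.
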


\begin{proof}

By definition, the expectation of a function of \(X\) is the integral of that function with respect to the density of \(X\). Since the indicator functions restrict \(X\) to \(X>0\) or \(X\leq 0\), respectively, the corresponding integration domains are \((0,\infty)\) and \((-\infty,0]\). We can therefore rewrite the required expressions as

\begin{align*}
\mathbb{E}\!\left[
e^{\lambda X}\mathbf{1}_{\{X>0\}}
\right]
&=
\frac{1}{\sigma_X\sqrt{2\pi}}
\int_0^\infty
\exp\!\left(
\lambda x-\frac{(x-\mu_X)^2}{2\sigma_X^2}
\right)dx
\\
&=
e^{\lambda\mu_X+\frac{1}{2}\lambda^2\sigma_X^2}
\frac{1}{\sigma_X\sqrt{2\pi}}
\int_0^\infty
\exp\!\left(
-\frac{(x-\mu_X-\lambda\sigma_X^2)^2}
{2\sigma_X^2}
\right)dx
\\
&=
e^{\lambda\mu_X+\frac{1}{2}\lambda^2\sigma_X^2}
\Phi\!\left(
\frac{\mu_X+\lambda\sigma_X^2}{\sigma_X}
\right),
\\[1ex]
\mathbb{E}\!\left[
e^{\lambda X}\mathbf{1}_{\{X\le0\}}
\right]
&=
\frac{1}{\sigma_X\sqrt{2\pi}}
\int_{-\infty}^0
\exp\!\left(
\lambda x-\frac{(x-\mu_X)^2}{2\sigma_X^2}
\right)dx
\\
&=
e^{\lambda\mu_X+\frac{1}{2}\lambda^2\sigma_X^2}
\frac{1}{\sigma_X\sqrt{2\pi}}
\int_{-\infty}^0
\exp\!\left(
-\frac{(x-\mu_X-\lambda\sigma_X^2)^2}
{2\sigma_X^2}
\right)dx
\\
&=
e^{\lambda\mu_X+\frac{1}{2}\lambda^2\sigma_X^2}
\Phi\!\left(
-\frac{\mu_X+\lambda\sigma_X^2}{\sigma_X}
\right).
\end{align*}
\end{proof}

\subsubsection{Activation function messages}
\label{appendix:activation-messages}

\paragraph{Forward message}
For the defined $g(x)$, the forward moments follow directly from Lemma~\ref{lemma:conditional_lognormal}:

\begin{equation*}
\begin{aligned}
\mathbb{E}[g(X)]
&=
\mathbb{E}\!\left[e^{\beta X}\mathbf{1}_{\{X>0\}}\right]
-
\mathbb{P}(X>0)
-
\alpha
\left(
\mathbb{E}\!\left[e^{-\beta X}\mathbf{1}_{\{X\le0\}}\right]
-
\mathbb{P}(X\le0)
\right),
\\
\mathbb{E}[g(X)^2]
&=
\mathbb{E}\!\left[e^{2\beta X}\mathbf{1}_{\{X>0\}}\right]
-
2\mathbb{E}\!\left[e^{\beta X}\mathbf{1}_{\{X>0\}}\right]
+
\mathbb{P}(X>0)
\\
&\quad+
\alpha^2
\left(
\mathbb{E}\!\left[e^{-2\beta X}\mathbf{1}_{\{X\le0\}}\right]
-
2\mathbb{E}\!\left[e^{-\beta X}\mathbf{1}_{\{X\le0\}}\right]
+
\mathbb{P}(X\le0)
\right).
\end{aligned}
\end{equation*}

\paragraph{Backward message}

Given $\alpha>0$ and $\beta>0$ in Equation~\ref{eq:activation}, the sign of $Y=g(X)$ uniquely identifies the corresponding branch: $Y>0$ implies $X>0$, while $Y\leq0$ implies $X\leq0$. Since each branch is strictly monotonic, $g^{-1}(Y)$ has a closed form:

\begin{equation}
\label{eq:activation-inverse}
g^{-1}(Y)
=
\begin{cases}
\dfrac{1}{\beta}\log(1+Y), & Y>0,\\[2mm]
-\dfrac{1}{\beta}\log\!\left(1-\dfrac{Y}{\alpha}\right), & Y\leq0.
\end{cases}
\end{equation}

For any positive random variable $Z$ with mean $\mu_Z$ and variance $\sigma_Z^2$, we approximate $Z$ by a log-normal distribution with matching first two moments, i.e., $Z \sim
\operatorname{LogNormal}
\!\left(
    Z;
    \mu_{\log Z},
    \sigma_{\log Z}^2
\right)$. By definition of the log-normal distribution, $\mu_Z=\exp\left(\mu_{\log Z}+\frac{1}{2}\sigma_{\log Z}^2\right)$ and $\sigma_Z^2=\left(\exp(\sigma_{\log Z}^2)-1\right)\exp\left(2\mu_{\log Z}+\sigma_{\log Z}^2\right)$. Solving for $\mu_{\log Z}$ and $\sigma_{\log Z}^2$, we get $\sigma_{\log Z}^2=\log\left(1+\frac{\sigma_Z^2}{\mu_Z^2}\right)$ and $\mu_{\log Z}=\log(\mu_Z)-\frac{1}{2}\sigma_{\log Z}^2$. Therefore, we have $\log(Z) \sim
\mathcal N\!\left(
    \cdot;
    \mu_{\log Z},
    \sigma_{\log Z}^2
\right)$ under the log-normal approximation. We apply this property to get the moments of $\log(1 + Y)$ and $\log\left(1 - \frac{Y}{\alpha}\right)$ and compute final moments of the function from Equation \ref{eq:activation-inverse}.

\subsubsection{Activation derivative messages}
\label{appendix:activation-derivative-messages}

The derivative of $g(z)$ is defined as
\begin{equation*}
g'(z)
=
\begin{cases}
\beta e^{\beta z}, & z>0,\\
\alpha\beta e^{-\beta z}, & z\le0.
\end{cases}
\end{equation*}

\paragraph{Forward message}
Using again Lemma~\ref{lemma:conditional_lognormal}, the forward moments are computed directly as
\begin{equation*}
\begin{aligned}
\mathbb{E}[g'(Z)]
&=
\beta
\mathbb{E}\!\left[
e^{\beta Z}\mathbf{1}_{\{Z>0\}}
\right]
+
\alpha\beta
\mathbb{E}\!\left[
e^{-\beta Z}\mathbf{1}_{\{Z\le0\}}
\right],
\\
\mathbb{E}[g'(Z)^2]
&=
\beta^2
\mathbb{E}\!\left[
e^{2\beta Z}\mathbf{1}_{\{Z>0\}}
\right]
+
\alpha^2\beta^2
\mathbb{E}\!\left[
e^{-2\beta Z}\mathbf{1}_{\{Z\le0\}}
\right].
\end{aligned}
\end{equation*}

\paragraph{Backward message}
Given $g'(Z)=D$, for the positive branch, $Z>0$ implies $D>\beta$, while for the negative branch, $Z\leq0$ implies $D\geq\alpha\beta$. Solving each branch for $Z$ gives
\begin{equation*}
\begin{aligned}
(g'_+)^{-1}(D) &= \dfrac{1}{\beta}\log\!\left(\dfrac{D}{\beta}\right), && D>\beta,\\[2mm]
(g'_-)^{-1}(D) &= -\dfrac{1}{\beta}\log\!\left(\dfrac{D}{\alpha\beta}\right), && D\geq\alpha\beta.
\end{aligned}
\end{equation*}
Unlike the forward transformation, $D$ does not uniquely identify the branch, since its ranges may overlap. For this reason, the residual-factor updates are derived without requiring an inverse of $g'$; see Appendices~\ref{appendix:advection} and \ref{appendix:kpp}.

\subsection{Derivations for the residual factor}
\label{appendix:residual-derivations}
In this section, we show the derivations of the residual factor $f_R$ for the advection equation (Appendix \ref{appendix:advection}) and the Fisher-KPP equation (Appendix \ref{appendix:kpp}). All derivations are performed for the shallow neural architecture used throughout this work, defined in Section \ref{section:approach}.

\subsubsection{Advection equation}
\label{appendix:advection}

The advection equation has a generic form given by Equation~\ref{eq:first_order_pde}. The physical meaning of the advection equation is the transport of a quantity by a given velocity field. For the shallow architecture defined in Section~\ref{section:approach}, the chain rule gives
\begin{equation*}
f_R^\prime(\mW,\vx) =  \vb^\top\nabla_{\vx}f_U(\vx;\vtheta)
=
g'(z)\mW^\top
\sum_{i=1}^{d} b_i\frac{\partial\phi(\vx)}{\partial x_i},
\qquad
z=\mW^\top\phi(\vx).
\end{equation*}

\paragraph{Forward message}

Starting from Equation~\ref{eq:advection_residual}, we compute the residual expectation as

\begin{equation*}
\mathbb{E}[c(\mW,\vx)]
=
\mathbb{E}[g'(z)\mW^\top]
\sum_{i=1}^{d} b_i\frac{\partial\phi(\vx)}{\partial x_i}
=
\sum_{i=1}^{d} b_i\frac{\partial\phi(\vx)}{\partial x_i} \cdot
\left(
\mathbb{E}[g'(z)]\mathbb{E}[\mW^\top]
+
\operatorname{Cov}(g'(z),\mW^\top)
\right).
\end{equation*}

The covariance term $\operatorname{Cov}(g'(z),\mW^\top)$ is computed as follows. Stein's lemma states that for two jointly Gaussian variables $\eta_1$ and $\eta_2$ and a function $\chi(.)$, the following equality holds:

\begin{equation}
\label{eq:stein-lemma}
\operatorname{Cov}(\eta_1,\chi(\eta_2))
=
\operatorname{Cov}(\eta_1,\eta_2)
\mathbb{E}[\chi'(\eta_2)].
\end{equation}

Since $W_k$ and $z$ are jointly Gaussian, we set $\eta_1=W_k$, $\eta_2=z$, and replace $\chi$ by  our $g'$. Hence,
\begin{equation*}
\begin{aligned}
\operatorname{Cov}(W_k,g'(z))
&=
\operatorname{Cov}(W_k,z)
\mathbb{E}[g''(z)]
=
\operatorname{Cov}
\left(
W_k,
\sum_j W_j\phi_j(\vx)
\right)
\mathbb{E}[g''(z)]
\\
&=
\sum_{j=1}^{P}
\phi_j(\vx)\operatorname{Cov}(W_k,W_j)
\mathbb{E}[g''(z)]
=
\phi_k(\vx)\sigma_k^2
\mathbb{E}[g''(z)].
\end{aligned}
\end{equation*}

As $c(\mW,\vx)$ is a non-linear function of correlated Gaussians, it does not have a closed-form variance expression, and we use the first-order Taylor expansion for $c(\mW,\vx)$ to get the variance approximation. The generic formula for this expansion is

\begin{equation}
\label{eq:first-order-approx}
F(X,Y)
\approx
F\bigl(\mathbb{E}[X],\mathbb{E}[Y]\bigr)
+
\left.
\frac{\partial F}{\partial X}
\right|_{(\mathbb{E}[X],\mathbb{E}[Y])}
\left(X-\mathbb{E}[X]\right)
+
\left.
\frac{\partial F}{\partial Y}
\right|_{(\mathbb{E}[X],\mathbb{E}[Y])}
\left(Y-\mathbb{E}[Y]\right).
\end{equation}

For simplicity, let $r(\vx) := \mW^\top \sum_{i=1}^{d} b_i\frac{\partial\phi(\vx)}{\partial x_i}$.
Then, $c(\mW,\vx)=g'(z)r(\vx)$ and we can approximate it as

\begin{equation}
\begin{aligned}
\label{eq:residual-approximation}
c(\mW,\vx)
&=
g'(z)r(\vx)
\\
&\approx
g'(\mathbb{E}[z])\mathbb{E}[r(\vx)]
+
g''(\mathbb{E}[z])\mathbb{E}[r(\vx)]
\left(z-\mathbb{E}[z]\right)
\\
&\quad+
g'(\mathbb{E}[z])
\left(r(\vx)-\mathbb{E}[r(\vx)]\right)
\\
&=
g'(\mathbb{E}[z])r(\vx)
+
g''(\mathbb{E}[z])\mathbb{E}[r(\vx)]
\left(z-\mathbb{E}[z]\right).
\end{aligned}
\end{equation}

Thus, the second moment approximation is

\begin{equation*}
\begin{aligned}
\mathbb{E}[(c(\mW,\vx)))^2]
&\approx
g'(\mathbb{E}[z])^2
\mathbb{E}[r(\vx)^2]
\\
&\quad+
\mathbb{E}[r(\vx)]^2
g''(\mathbb{E}[z])^2
\operatorname{Var}(z)
\\
&\quad+
2g'(\mathbb{E}[z])
\mathbb{E}[r(\vx)]
g''(\mathbb{E}[z])
\operatorname{Cov}(r(\vx),z),
\end{aligned}
\end{equation*}

where the covariance term can be computed straightforwardly.

\paragraph{Backward message}
In the backward update, we send a message to each $W_k$. Since
$z = \mW^\top\phi(\vx) = \sum_j W_j[\phi(\vx)]_j$
depends on $W_k$, the residual is nonlinear in $W_k$ and cannot be solved for $W_k$ directly; we therefore use the same first-order approximation as in the forward direction, defined in Equation~\ref{eq:residual-approximation}. We solve Equation~\ref{eq:first-order-approx} for $W_k$, assuming all moments and derivatives are already evaluated and therefore constitute fixed coefficients. Then, the coefficient before $W_k$ is

\begin{equation*}
A_k
=
\left(
\sum_{i=1}^{d} b_i
\left[
\frac{\partial\phi(\vx)}{\partial x_i}
\right]_k
\right)
g'(\mathbb{E}[z])
+
[\phi(\vx)]_k
\mathbb{E}[r(\vx)]
g''(\mathbb{E}[z]).
\end{equation*}

Then, collecting all other terms, we can express $W_k$ as

\begin{equation*}
\begin{aligned}
W_k
&\approx
\frac{1}{A_k}
\Bigg[
R
-
g'(\mathbb{E}[z])
\sum_{j\neq k}
\left(
\sum_{i=1}^{d}b_i
\left[
\frac{\partial\phi(\vx)}{\partial x_i}
\right]_j
\right)W_j
\\
&\quad
-
\mathbb{E}\left[
\mW^\top
\sum_{i=1}^{d}b_i
\frac{\partial\phi(\vx)}{\partial x_i}
\right]
g''(\mathbb{E}[z])
\left(
\sum_{j\neq k}
[\phi(\vx)]_jW_j
-
\mathbb{E}[z]
\right)
\Bigg].
\end{aligned}
\end{equation*}

Given that the incoming residual message is independent of the incoming weights and $\mathbb{E}[R]=0$, moment matching gives the following results:
\begin{equation*}
\begin{aligned}
\mathbb{E}[W_k]
&\approx
\frac{1}{A_k}
\Bigg[
-
g'(\mathbb{E}[z])
\sum_{j\neq k}
\left(
\sum_{i=1}^{d}b_i
\left[
\frac{\partial\phi(\vx)}{\partial x_i}
\right]_j
\right)\mu_j
\\
&\quad
-
\mathbb{E}\left[
\mW^\top
\sum_{i=1}^{d}b_i
\frac{\partial\phi(\vx)}{\partial x_i}
\right]
g''(\mathbb{E}[z])
\left(
\sum_{j\neq k}
[\phi(\vx)]_j\mu_j
-
\mathbb{E}[z]
\right)
\Bigg], \\
\operatorname{Var}(W_k)
&\approx
\frac{1}{A_k^2}
\Bigg[
\epsilon_C^2
+
g'(\mathbb{E}[z])^2
\sum_{j\neq k}
\left(
\sum_{i=1}^{d}b_i
\left[
\frac{\partial\phi(\vx)}{\partial x_i}
\right]_j
\right)^2
\sigma_j^2
\\
&\quad
+
\mathbb{E}\left[
\mW^\top
\sum_{i=1}^{d}b_i
\frac{\partial\phi(\vx)}{\partial x_i}
\right]^2
g''(\mathbb{E}[z])^2
\sum_{j\neq k}
[\phi(\vx)]_j^2\sigma_j^2
\\
&\quad
+
2
g'(\mathbb{E}[z])
\mathbb{E}\left[
\mW^\top
\sum_{i=1}^{d}b_i
\frac{\partial\phi(\vx)}{\partial x_i}
\right]
g''(\mathbb{E}[z])
\sum_{j\neq k}
\left(
\sum_{i=1}^{d}b_i
\left[
\frac{\partial\phi(\vx)}{\partial x_i}
\right]_j
\right)
[\phi(\vx)]_j
\sigma_j^2
\Bigg].
\end{aligned}
\end{equation*}

\subsubsection{Fisher-KPP equation}
\label{appendix:kpp}

The Fisher-KPP equation is a reaction--diffusion PDE that extends the idea of transport and describes how a quantity spreads through diffusion while also growing locally. It is expressed through the following PDE:
\begin{equation}
\label{eq:fisher_residual}
f_R^\prime(t,s)
:=
\frac{\partial f}{\partial t}
-
\nu\frac{\partial^2 f}{\partial s^2}
-
\rho f(1-f),
\end{equation}
where $\nu>0$ is a given diffusion coefficient, $\rho>0$ is the logistic growth rate, and $f=f_U(\vx;\vtheta)$ denotes the inferred population density.

\paragraph{Forward message}
The expectation follows by taking the expectation of each term in
Equation~\ref{eq:fisher_residual}:
\begin{equation*}
\begin{aligned}
\mathbb E[c(t,s)]
={}&
\mathbb E\left[
g'(z)\mW^\top\frac{\partial\phi}{\partial t}
\right]
-
\nu\mathbb E\left[
g'(z)\mW^\top\frac{\partial^2\phi}{\partial s^2}
\right]
\\
&-
\nu\mathbb E\left[
g''(z)
\left(
\mW^\top\frac{\partial\phi}{\partial s}
\right)^2
\right]
-\rho\mathbb E[g(z)]
+\rho\mathbb E[g(z)^2].
\end{aligned}
\end{equation*}

Using Stein's lemma from Equation~\ref{eq:stein-lemma}, for any deterministic vector $\va$,
\begin{equation*}
\begin{aligned}
\mathbb E[g'(z)\mW^\top\va]
&=
\sum_k a_k\,\mathbb E[W_k g'(z)]
\\
&=
\sum_k a_k
\left(
\mathbb E[W_k]\mathbb E[g'(z)]
+
\operatorname{Cov}(W_k,g'(z))
\right)
\\
&=
\sum_k a_k
\left(
\mathbb E[W_k]\mathbb E[g'(z)]
+
\phi_k(\vx)\operatorname{Var}(W_k)\mathbb E[g''(z)]
\right).
\end{aligned}
\end{equation*}

Substituting $\frac{\partial\phi}{\partial t}$ and
$\frac{\partial^2\phi}{\partial s^2}$ for $\va$, we can compute the first two terms.
For the next quadratic derivative term, let
\[
\zeta_s
=
\mW^\top\frac{\partial\phi}{\partial s}.
\]
Its first two moments are
\begin{equation*}
\mathbb E[\zeta_s]
=
\sum_k
\mathbb E[W_k]\frac{\partial\phi_k}{\partial s},
\qquad
\mathbb E[\zeta_s^2]
=
\mathbb E[\zeta_s]^2
+
\sum_k
\operatorname{Var}(W_k)
\left(
\frac{\partial\phi_k}{\partial s}
\right)^2.
\end{equation*}
Since $z$ and $\zeta_s$ depend on the same weights, they are generally correlated.
For simplicity, we introduce an additional moment-factorization approximation by neglecting the dependence between $g''(z)$ and $\zeta_s^2$:
\begin{equation*}
\mathbb E[g''(z)\zeta_s^2]
\approx
\mathbb E[g''(z)]\,\mathbb E[\zeta_s^2].
\end{equation*}

Thus, we approximate the residual mean as
\begin{equation*}
\begin{aligned}
\mathbb E[c(t,s)]
\approx{}&
\sum_k
\frac{\partial\phi_k}{\partial t}
\left(
\mathbb E[W_k]\mathbb E[g'(z)]
+
\phi_k\operatorname{Var}(W_k)\mathbb E[g''(z)]
\right)
\\
&-
\nu\sum_k
\frac{\partial^2\phi_k}{\partial s^2}
\left(
\mathbb E[W_k]\mathbb E[g'(z)]
+
\phi_k\operatorname{Var}(W_k)\mathbb E[g''(z)]
\right)
\\
&-
\nu\mathbb E[g''(z)]
\left[
\left(
\sum_k
\mathbb E[W_k]\frac{\partial\phi_k}{\partial s}
\right)^2
+
\sum_k
\operatorname{Var}(W_k)
\left(
\frac{\partial\phi_k}{\partial s}
\right)^2
\right]
\\
&-
\rho\mathbb E[g(z)]
+
\rho\mathbb E[g(z)^2].
\end{aligned}
\end{equation*}

To approximate the variance, as in the derivations for the advection equation in
Section~\ref{appendix:advection}, we use a first-order approximation of the
complete residual around the expected weights, as in
Equation~\ref{eq:first-order-approx}. Writing $R=f_R(\vx,\mW)$,
\begin{equation*}
R
\approx
R(\mathbb E[\mW])
+
\sum_k
\left.
\frac{\partial R}{\partial W_k}
\right|_{\mW=\mathbb E[\mW]}
\left(
W_k-\mathbb E[W_k]
\right).
\end{equation*}
Since the weights are represented by independent Gaussian messages,
\begin{equation*}
\operatorname{Var}(R)
\approx
\sum_k
\left(
\left.
\frac{\partial R}{\partial W_k}
\right|_{\mW=\mathbb E[\mW]}
\right)^2
\operatorname{Var}(W_k).
\end{equation*}

Differentiating the residual gives
\begin{equation*}
\begin{aligned}
\frac{\partial R}{\partial W_k}
={}&
g'(z)\frac{\partial\phi_k}{\partial t}
+
g''(z)\phi_k
\mW^\top\frac{\partial\phi}{\partial t}
\\
&-
\nu g'(z)\frac{\partial^2\phi_k}{\partial s^2}
-
\nu g''(z)\phi_k
\mW^\top\frac{\partial^2\phi}{\partial s^2}
\\
&-
\nu g'''(z)\phi_k
\left(
\mW^\top\frac{\partial\phi}{\partial s}
\right)^2
\\
&-
2\nu g''(z)
\left(
\mW^\top\frac{\partial\phi}{\partial s}
\right)
\frac{\partial\phi_k}{\partial s}
\\
&+
\rho\phi_k g'(z)
\bigl(2g(z)-1\bigr).
\end{aligned}
\end{equation*}

The moments of $g(z)$ and $g'(z)$ are given in Appendices~\ref{appendix:activation-messages} and \ref{appendix:activation-derivative-messages}, respectively; the computation of
$g''(z)$ and $g'''(z)$ changes only by the corresponding constant factors.

\paragraph{Backward message}
In the backward update, we apply the same local linearization used for the advection equation in Appendix~\ref{appendix:advection}. The coefficient $A_k$ is defined by evaluating the residual derivative at the expected weights:
\begin{equation*}
A_k
=
\left.
\frac{\partial R}{\partial W_k}
\right|_{\mW=\mathbb E[\mW]}.
\end{equation*}

For notation simplicity, define
\begin{equation*}
\begin{aligned}
\mathbb E[z]
=
\mathbb E[\mW]^\top\phi,
&\quad
\mathbb E[\zeta_t]
=
\mathbb E[\mW]^\top
\frac{\partial\phi}{\partial t},
\\
\mathbb E[\zeta_{ss}]
=
\mathbb E[\mW]^\top
\frac{\partial^2\phi}{\partial s^2},
&\quad
\mathbb E[\zeta_s]
=
\mathbb E[\mW]^\top
\frac{\partial\phi}{\partial s}.
\end{aligned}
\end{equation*}

Then, we can expand $A_k$ as
\begin{equation*}
\begin{aligned}
A_k
={}&
g'(\mathbb E[z])
\frac{\partial\phi_k}{\partial t}
+
g''(\mathbb E[z])
\phi_k
\mathbb E[\zeta_t]
\\
&-
\nu g'(\mathbb E[z])
\frac{\partial^2\phi_k}{\partial s^2}
-
\nu g''(\mathbb E[z])
\phi_k
\mathbb E[\zeta_{ss}]
\\
&-
\nu g'''(\mathbb E[z])
\phi_k
\mathbb E[\zeta_s]^2
-
2\nu g''(\mathbb E[z])
\mathbb E[\zeta_s]
\frac{\partial\phi_k}{\partial s}
\\
&+
\rho\phi_k
g'(\mathbb E[z])
\left(
2g(\mathbb E[z])-1
\right).
\end{aligned}
\end{equation*}

Then, $W_k$ can be approximated as
\begin{equation*}
W_k
\approx
\mathbb E[W_k]
+
\frac{1}{A_k}
\left[
R
-
\mathbb E[R]
-
\sum_{j\neq k}
A_j
\left(
W_j-\mathbb E[W_j]
\right)
\right].
\end{equation*}

Given target $R\sim\mathcal N(\cdot;0,\epsilon_C^2)$, we substitute $\mathbb E[R]$ and
$\operatorname{Var}(R)$ accordingly and moment match the $W_k$ approximation in the same way as for the backward message for the advection equation in Appendix~\ref{appendix:advection}.

\subsection{Experimental details}
\label{appendix:experimental_details}

\paragraph{Data} Trajectories are drawn once from a fixed-seed permutation of each PDEBench~\citep{takamoto2022pdebench} setting; the trajectory used during development is excluded. Advection uses ten trajectories across five speeds each; the time cost of these experiments is mostly dominated by the HMC and VI references. Fisher-KPP uses five trajectories in each of its 16 settings. Its stored trajectories freeze at non-physical values once the field homogenizes (a float32 artifact of the dataset generator), so Fisher-KPP errors are measured against a float64 re-solve of the PDE from the same initial condition.

\paragraph{Shared setup} We use $1024$ labeled points at $t=0$ and $48K_t$ stratified collocation points, where $K_t$ is the number of temporal Fourier frequencies in the basis. For each trajectory, all methods share the basis, rotation, factor locations, and seeds, thereby targeting the same posterior. Message passing visits the factors in a shuffled order each epoch and stops when the largest per-factor Kullback--Leibler (KL) divergence falls below $10^{-5}$, or after 300 epochs.

\paragraph{Uncertainty metrics} 
\label{appendix:uncertainty-metrics}
Each method yields a Gaussian predictive $\mathcal N(\cdot;\mu_j,\sigma_j^2)$ of the network output at every point $(x_j,t_j)$ of the evaluation grid: our marginal moments, or the sample mean and variance over posterior draws for HMC and VI. With the exact solution $u_j$ and $z_j=(u_j-\mu_j)/\sigma_j$, the coverage of the nominal 90\,\% interval and the closed-form Gaussian CRPS~\citep{gneiting2007strictly} are
\begin{equation*}
\mathrm{Cov}_{90} = \frac{1}{G}\sum_{j=1}^{G}\mathds{1}\left[\,|z_j|\le\Phi^{-1}(0.95)\right],
\quad
\mathrm{CRPS} = \frac{1}{G}\sum_{j=1}^{G}\sigma_j\!\left[z_j\bigl(2\Phi(z_j)-1\bigr)+2\varphi(z_j)-\tfrac{1}{\sqrt\pi}\right],
\end{equation*}
where $\Phi$ and $\varphi$ are the standard normal cumulative density function and density, $\Phi^{-1}(0.95)\approx1.645$, and the averages run over the $G$ grid points of all runs. For a point prediction ($\sigma_j\to0$) CRPS reduces to the absolute error $|u_j-\mu_j|$.

\paragraph{Reaction continuation (Fisher-KPP)} The logistic reaction adds a spurious posterior mode at $f\equiv0$, which message passing and gradient-based training reach from their default initializations at a strong reaction. This is a common failure mode for learning PDEs~\cite{krishnapriyan2021characterizing}. All methods therefore train in four passes at reaction coefficient $\lambda\rho$, $\lambda\in\{0.25,0.5,0.75,1\}$, each warm-started from the previous one, within unchanged iteration budgets. This roughly doubles message passing training time; reported times include it.

\paragraph{Baselines} The SGD baseline, VI and the HMC warm start use Adam (learning rate $3\times10^{-4}$, at most $30{,}000$ iterations, split evenly over the passes on Fisher-KPP), halving the rate on loss plateaus (less than $0.1\,\%$ improvement over 500 iterations) up to four times and stopping at the next plateau. VI is mean-field Gaussian with eight reparameterized samples per iteration.

\paragraph{HMC reference} Four chains of $1{,}000$ iterations use a dense Gauss--Newton mass matrix at the maximum a posteriori (MAP) estimate (the polished Adam warm start; on Fisher-KPP, the better of the warm starts from zero and from the message passing mean). The step size is adapted by dual averaging to acceptance $0.8$ over 250 discarded warm-up iterations, and the number of leapfrog steps per proposal is drawn uniformly from $\{1,\dots,20\}$. Every coordinate of all 50 advection and 80 Fisher-KPP reference runs reaches split-$\hat R<1.05$ and bulk effective sample size $>100$.

\paragraph{Width ablation} For this ablation, the wider model is $f_H(\vx)=\sum_{h=1}^{H} a_h g(z_h)$ with $z_h=\mW_h^\top\phi(\vx)$ and fixed, distinct readout weights $a_h$. Models with $H\in\{2,3,4\}$ hidden units keep everything else of the single-unit model. They run on all 50 advection runs and, on Fisher-KPP, on all 80 runs at $H=2$ and on 33 ($H=3$) and 19 ($H=4$) runs covering all 16 settings.

\paragraph{Hardware and timing} All experiments ran on one 8-core AMD EPYC (Genoa) central processing unit (CPU) with 16\,GB random-access memory (RAM) and no graphics processing unit (GPU) (Julia 1.12.6). Reported training times are cold-start and include just-in-time compilation; steady-state inference times are the minimum over seven repetitions (three for HMC) after a warm-up call on the otherwise idle machine.
\subsection{Additional Evaluations}
\label{appendix:additional-evaluations}
We provide additional experimental evaluations for the advection and Fisher-KPP equations and the PDE residual prior.
\subsubsection{Advection equation}
\label{appendix:advection-results}

This appendix includes additional results for the advection equation that are mentioned in Section \ref{section:evaluation} before. Figure \ref{fig:mp_vs_pdebench_scatter_grid} compares inferred posteriors between our approach and the non-Bayesian PINN baseline. Figure \ref{fig:uq}(a) demonstrates a monotonic pattern in the uncertainty increase over time; it also shows that the rate at which uncertainty increases depends on the wave speed \(b\), indicating that the inferred uncertainty responds to changes in the PDE dynamics as well as to the distance from the observed initial condition.
\begin{figure}
    \centering
    \includegraphics[width=.8\linewidth]{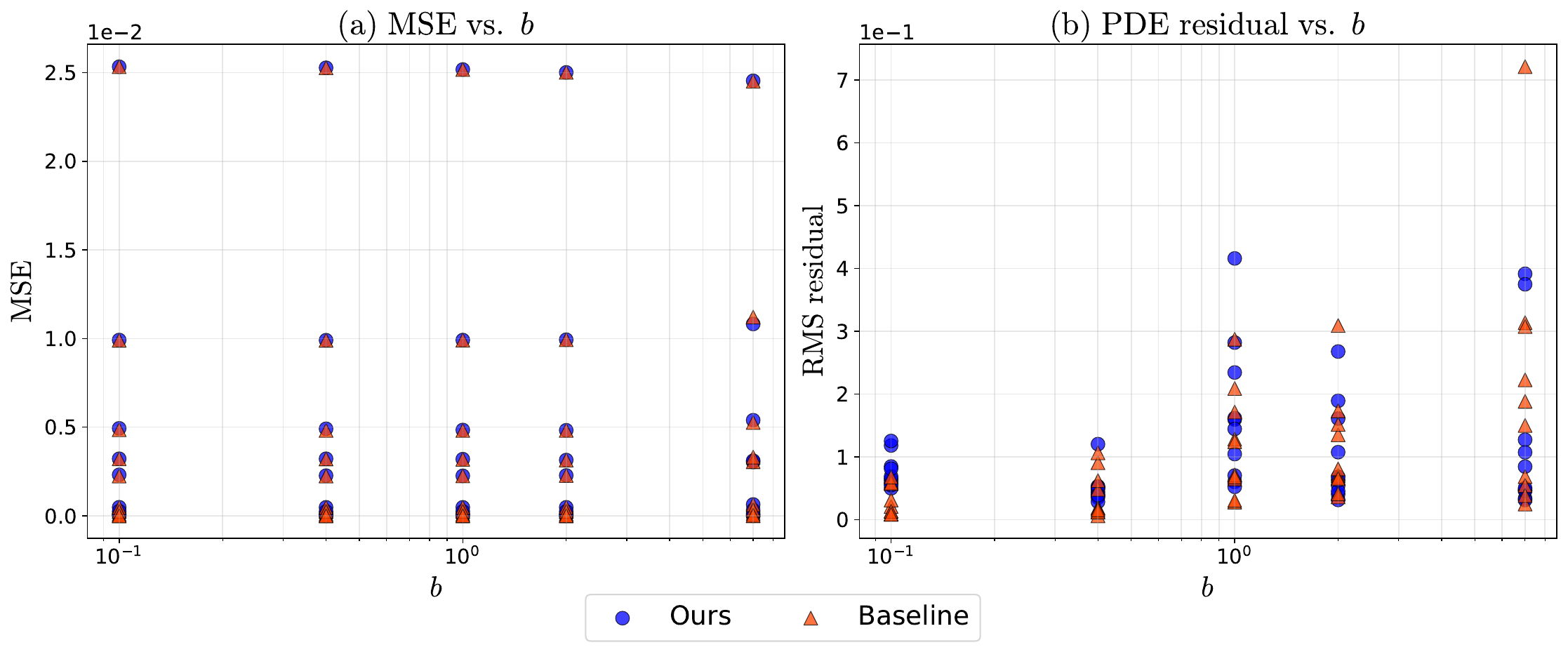}
    \caption{Comparison of our message passing approach (ours) and the SGD baseline for the advection equation; each point refers to one trajectory. \textbf{(a)} mean squared error (MSE) of the predictive mean against the exact solution; \textbf{(b)} root mean square (RMS) of the PDE residual $R$. Both methods reach errors of the same order at every wave speed.}
    \label{fig:mp_vs_pdebench_scatter_grid}
\end{figure}
\begin{figure}
    \centering
    \includegraphics[width=1.\linewidth]{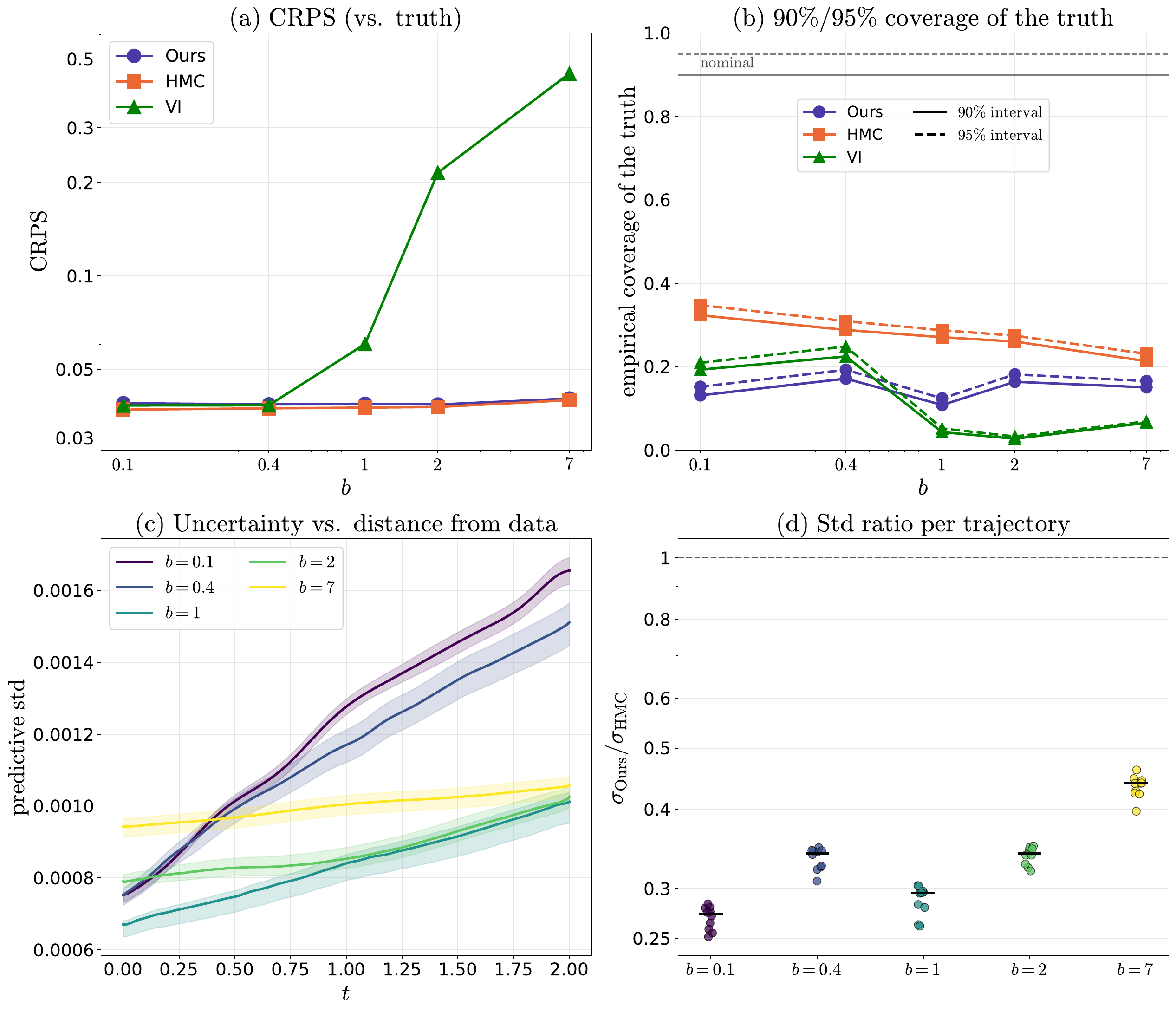}
    \caption{Calibration and uncertainty structure for the advection equation, ten trajectories per wave speed $b$. \textbf{(a)} CRPS against the exact solution; \textbf{(b)} empirical coverage of the exact solution by the nominal 90\,\% (solid) and 95\,\% (dashed) predictive intervals; \textbf{(c)} our mean predictive standard deviation over time $t$ (band: minimum to maximum over trajectories), growing with the distance from the observed initial condition at a rate that depends on $b$; \textbf{(d)} per-trajectory median ratio of our predictive standard deviation to that of HMC (bar: median over trajectories), roughly constant within each $b$. Ours and HMC share accuracy and the uncertainty pattern, whereas VI degrades for $b\geq1$.}
\label{fig:uq}
\end{figure}

Figures \ref{fig:bayesian_comparison_ours_vi} and \ref{fig:bayesian_comparison_vi_hmc} support the claim from Section \ref{fig:bayesian_comparison} that our inferred means are close to those from HMC and VI, but our inferred variance has a more similar structure to HMC than VI does.

\begin{figure}
    \centering
    \includegraphics[width=.95\linewidth]{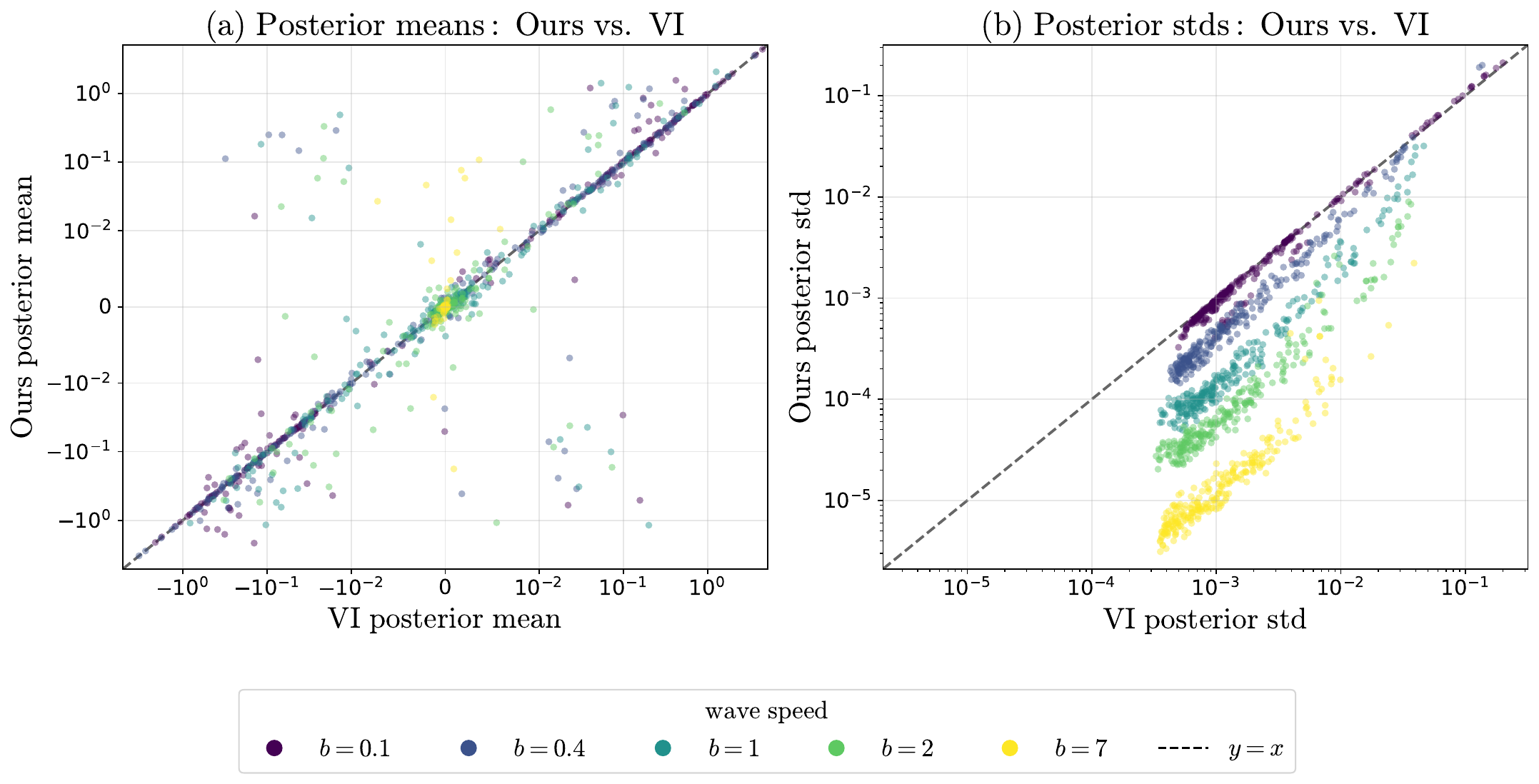}
    \caption{Comparison of our approach (ours) to VI, implemented based on \cite{yang2021b}, for the advection equation. We map posterior weights of VI (x-axis) to our approach (y-axis) - points in \textbf{(a)} refer to the mean values, points in \textbf{(b)} refer to the standard deviations; the means largely agree, while the VI standard deviations deviate from ours increasingly with the wave speed $b$. For readability, 25 random weights per trajectory are shown.}
    \label{fig:bayesian_comparison_ours_vi}
\end{figure}

\begin{figure}
    \centering
    \includegraphics[width=.95\linewidth]{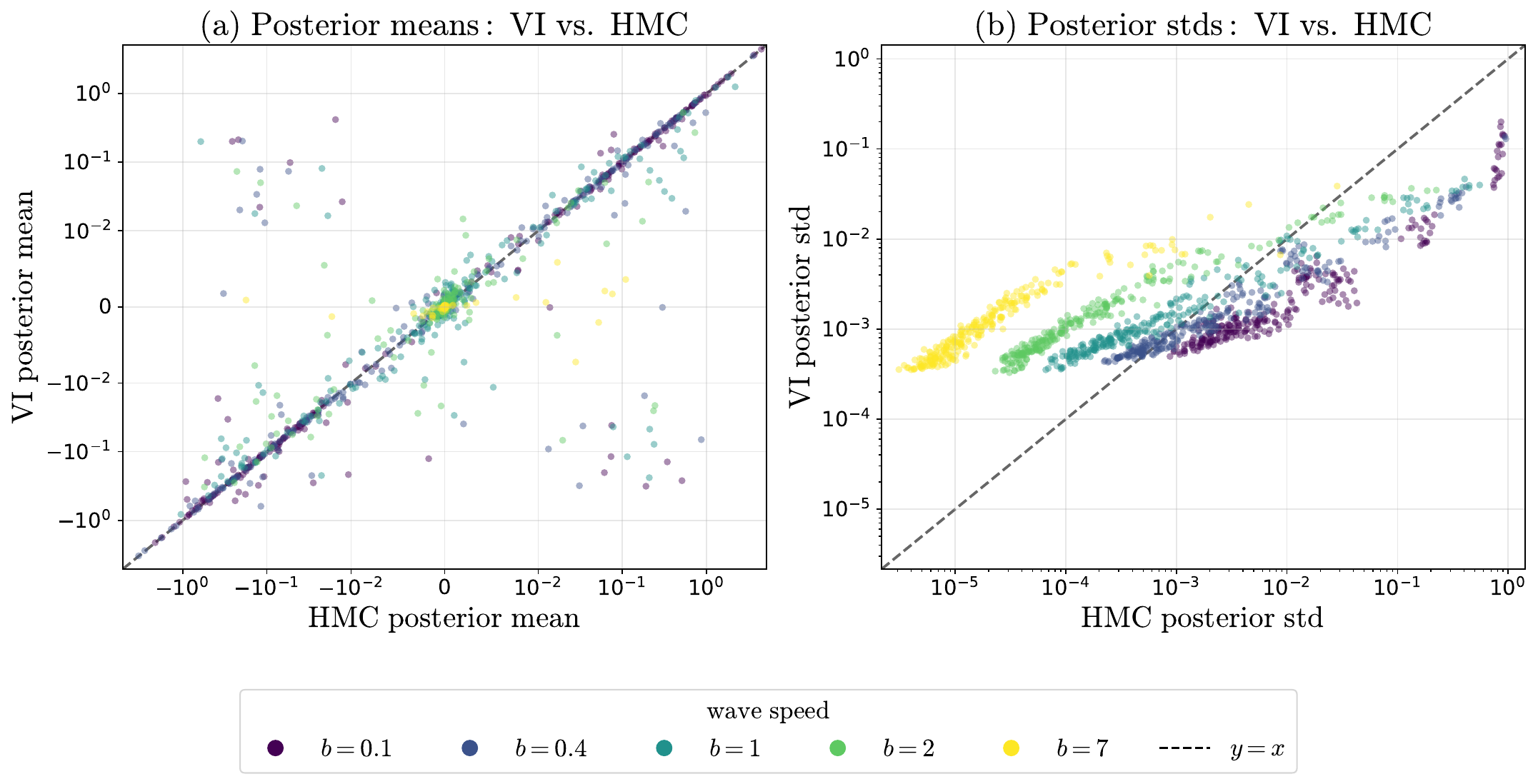}
    \caption{Comparison of VI to HMC, both implemented based on \cite{yang2021b}, for the advection equation. We map posterior weights of HMC (x-axis) to VI (y-axis) - points in \textbf{(a)} refer to the mean values, points in \textbf{(b)} refer to the standard deviations; VI underestimates the standard deviations at $b\leq0.4$ and overestimates them at $b\geq2$, so it is further from HMC than our approach (Figure~\ref{fig:bayesian_comparison}). For readability, 25 random weights per trajectory are shown.}
    \label{fig:bayesian_comparison_vi_hmc}
\end{figure}

Figure~\ref{fig:wider_model} provides an ablation study on wider architectures with the increasing number of neurons. Among other results, it demonstrates that increasing dimensionality does not substantially increase training time.

\begin{figure}
    \centering
    \includegraphics[width=1.\linewidth]{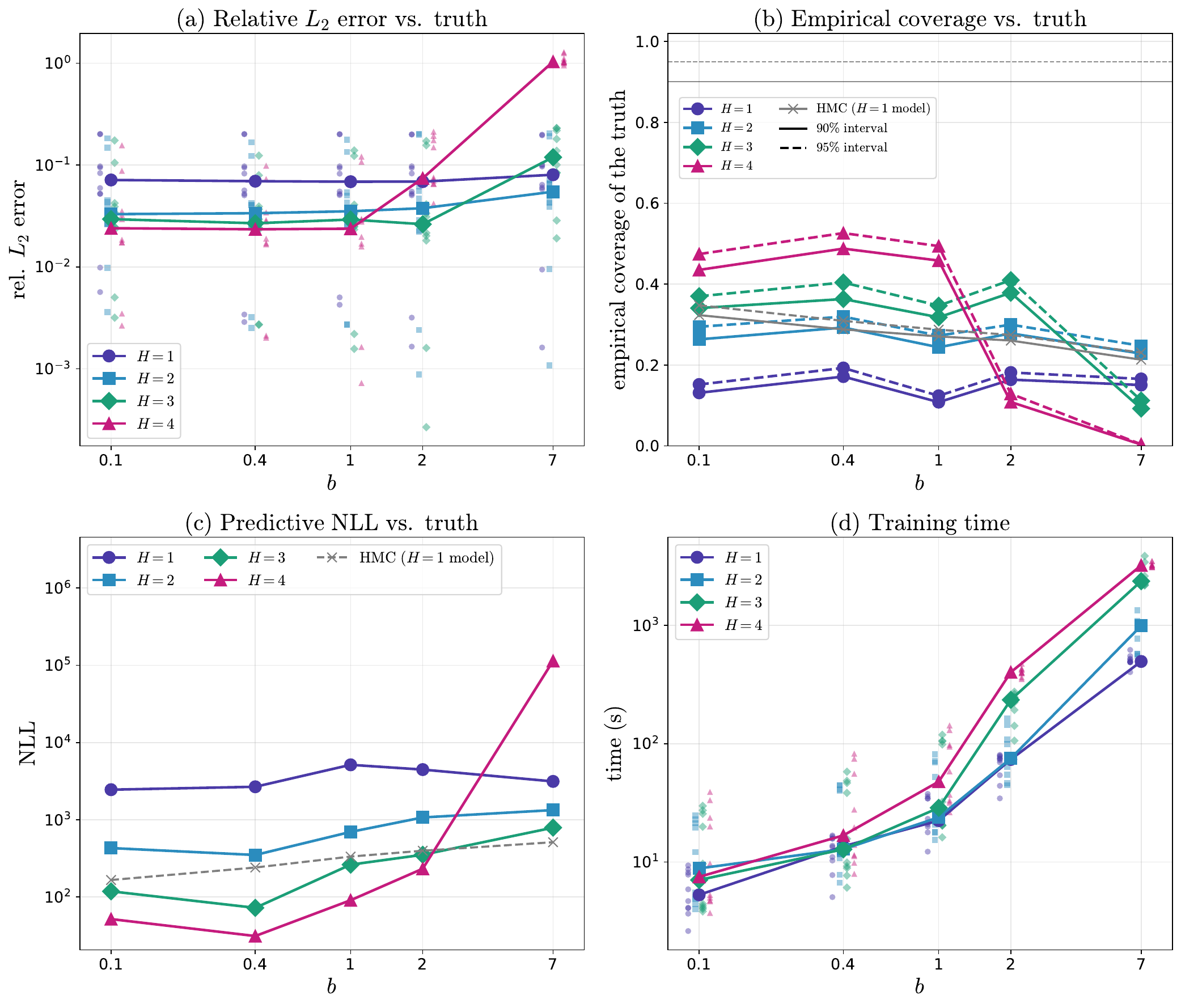}
    \caption{Neural layer width ablation for the advection equation: $H\in\{1,2,3,4\}$ neurons in the hidden layer. \textbf{(a)} Relative $L_2$ error of the predictive mean (points: trajectories; lines: medians); \textbf{(b)} empirical coverage of the exact solution by the nominal 90\,\% (solid) and 95\,\% (dashed) predictive intervals; \textbf{(c)} predictive negative log-likelihood (NLL) of the exact solution, mean over trajectories; \textbf{(d)} training time (points: trajectories; lines: medians). HMC (grey) refers to the $H=1$ model. Wider layers reduce the error and improve calibration for $b\leq1$ at a higher training cost; at larger $b$, $H\geq3$ degrades, and $H=4$ fails at $b=7$.}
\label{fig:wider_model}
\end{figure}
\subsubsection{Fisher-KPP equation}
\label{appendix:fisher-results}
This appendix presents additional results for the Fisher-KPP equation that are discussed in the main text but included here due to space limitations. Figures \ref{fig:fisher-pdebench} and \ref{fig:fisher-simple} show predicted means and variances for two different random trajectories; as in the advection results, the means reflect dynamics close to the exact solutions. For one of the initial conditions, the prediction error increases with the growth rate $\rho$, but configurations with high growth rates are already known to be problematic and are described in \cite{krishnapriyan2021characterizing} as a source of failure modes. The predicted variance shows the following pattern: it is especially large when a physical process is active, but otherwise it grows over time as we move further from the observed initial condition.

\begin{figure}
    \centering
    \includegraphics[width=1.\linewidth]{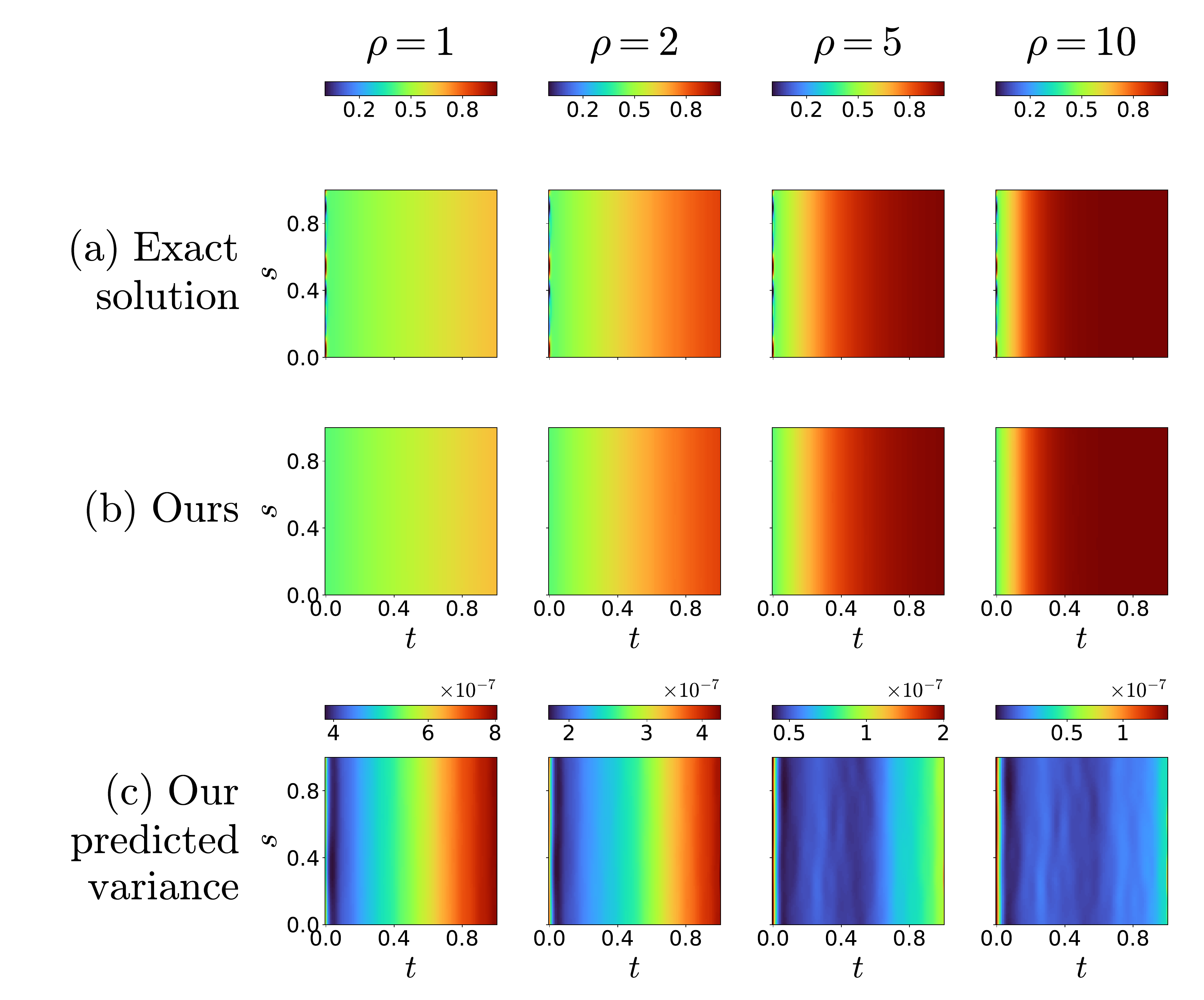}
    \caption{Our message passing approach for the Fisher-KPP equation on one PDEBench trajectory at $\nu=1$ across reaction strengths $\rho\in\{1,2,5,10\}$, in the coordinates $\vx=(s,t)$. \textbf{(a)} the exact solution; \textbf{(b)} predicted mean values of our message passing solution, which reproduce the reaction-driven saturation; \textbf{(c)} predicted variance values of our message passing solution.}
    \label{fig:fisher-pdebench}
\end{figure}

\begin{figure}
    \centering
    \includegraphics[width=0.86\linewidth]{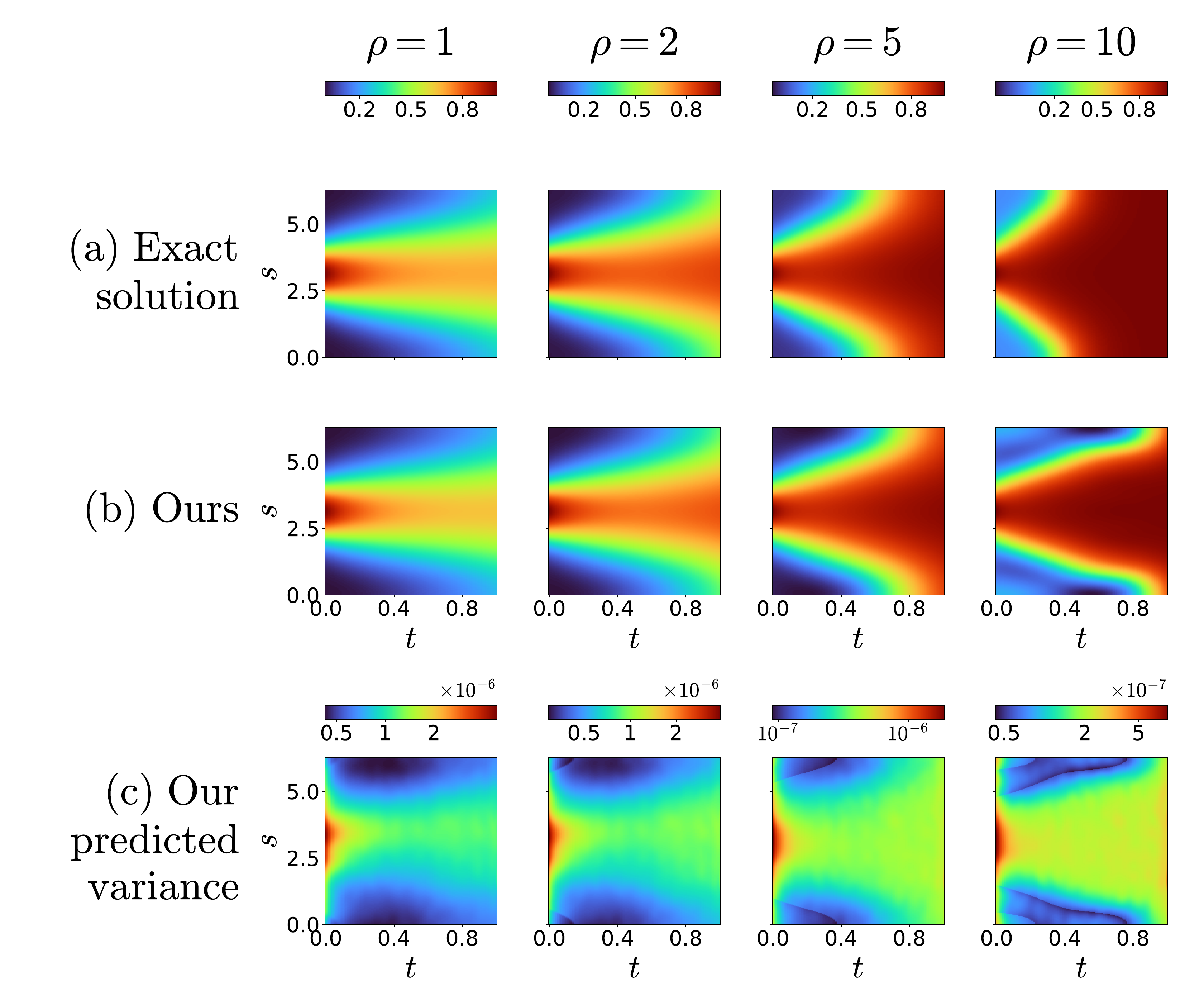}
    \caption{Our message passing approach for the Fisher-KPP equation with a Gaussian initial condition, $f(s,0)=\exp\!\big(-(s-\pi)^2/(2(\pi/4)^2)\big)$ on $s\in[0,2\pi)$, at $\nu=1$ across reaction strengths $\rho\in\{1,2,5,10\}$, in the coordinates $\vx=(s,t)$. \textbf{(a)} the exact solution; \textbf{(b)} predicted mean values of our message passing solution; \textbf{(c)} predicted variance values of our message passing solution.}
    \label{fig:fisher-simple}
\end{figure}

Figure \ref{fig:mp_vs_pdebench_scatter_grid_fisher} investigates a broader scope of the trajectories: it compares our approach with the non-Bayesian PINN baseline introduced in Section \ref{subsection:solution-fidelity} across different growth rates $\rho$ and demonstrates that both methods exhibit similar performance in terms of MSE and PDE residual, while our method even exhibits slightly lower residual.

\begin{figure}
    \centering
    \includegraphics[width=.95\linewidth]{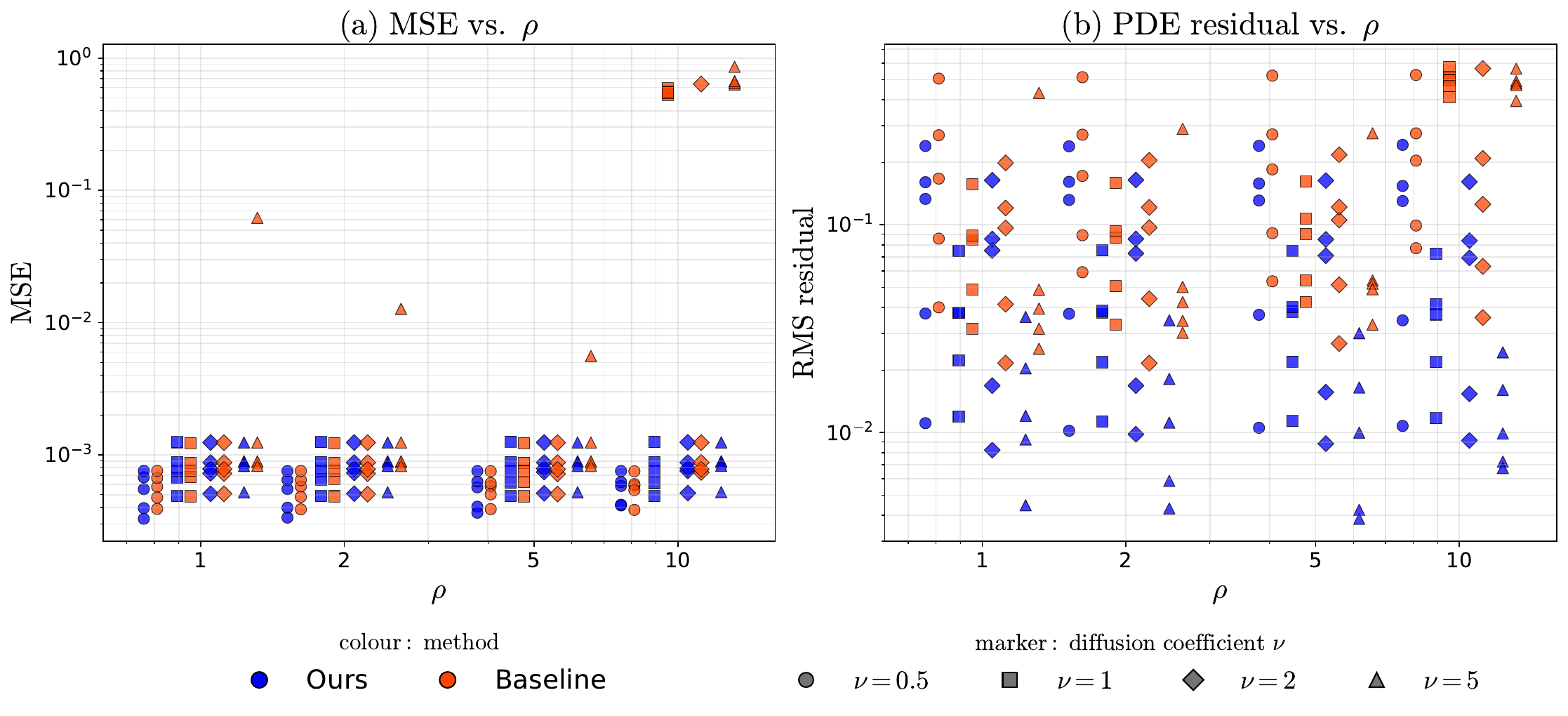}
    \caption{Comparison of our message passing approach (ours) and the SGD baseline for the Fisher-KPP equation across reaction strengths $\rho\in\{1,2,5,10\}$ (log scale), with the diffusion coefficient $\nu\in\{0.5,1,2,5\}$ shown by the marker, five trajectories per setting; each point refers to one trajectory. \textbf{(a)} MSE of the predictive mean against the exact solution; \textbf{(b)} root mean square (RMS) of the PDE residual $R$. Our errors are of the same order in all settings, while the baseline collapses to the trivial solution $f\equiv0$ on most runs at $\rho=10$, $\nu\geq1$.}
    \label{fig:mp_vs_pdebench_scatter_grid_fisher}
\end{figure}

Figure \ref{fig:bayesian_comparison_time_fisher} presents the time-efficiency results: our approach is significantly faster than both HMC and VI, and comparable to VI for inference. VI is the slowest during training because it does not meet the convergence criterion within the allocated training budget, as shown by the quantitative results in Table~\ref{tab:main_results}.

\begin{figure}
    \centering
    \includegraphics[width=1.\linewidth]{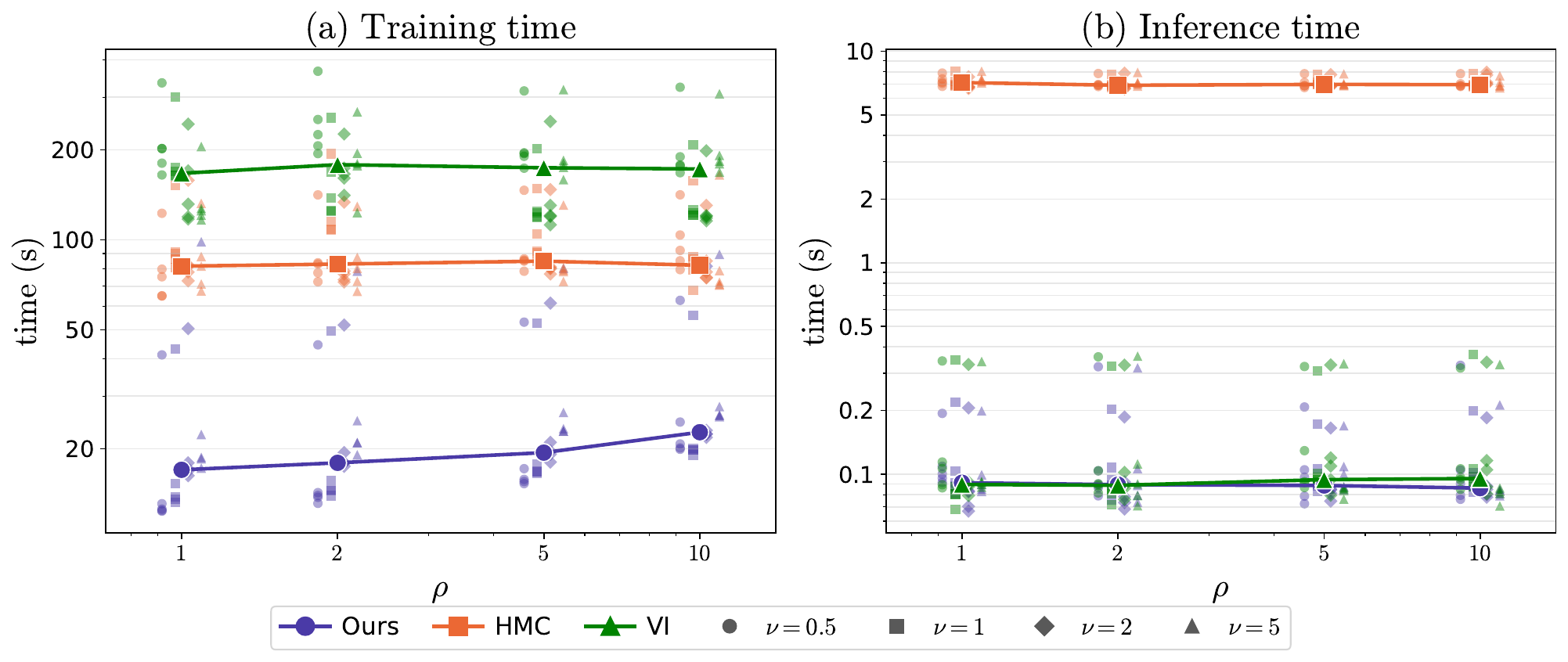}
    \caption{Computational cost of our approach (ours), HMC and VI for the Fisher-KPP equation across reaction strengths $\rho$, with the diffusion coefficient $\nu$ shown by the marker (points: runs; lines: medians). \textbf{(a)} Training time, cold start; \textbf{(b)} inference time of the predictive mean and variance on the full grid, steady state. Our training is about $4\times$ faster than HMC and $9\times$ faster than VI; our closed-form inference is as fast as VI's and about $80\times$ faster than HMC's sampling-based inference.}
    \label{fig:bayesian_comparison_time_fisher}
\end{figure}

Posterior comparison against HMC and VI is shown in Figures \ref{fig:bayesian_comparison_fisher} and \ref{fig:bayesian_comparison_ours_vi_fisher}. As with the advection results, we also observe that our approach infers means that are very close to those produced by HMC and VI, while our uncertainties are much closer to HMC uncertainties. Figure \ref{fig:bayesian_comparison_vi_hmc_fisher} additionally shows that VI standard deviations strongly decline from the HMC ones. Overall, it exhibits the same pattern as in the advection results, discussed in Section \ref{subsection:bayesian_comparison}.
\begin{figure}
    \centering
    \includegraphics[width=1.\linewidth]{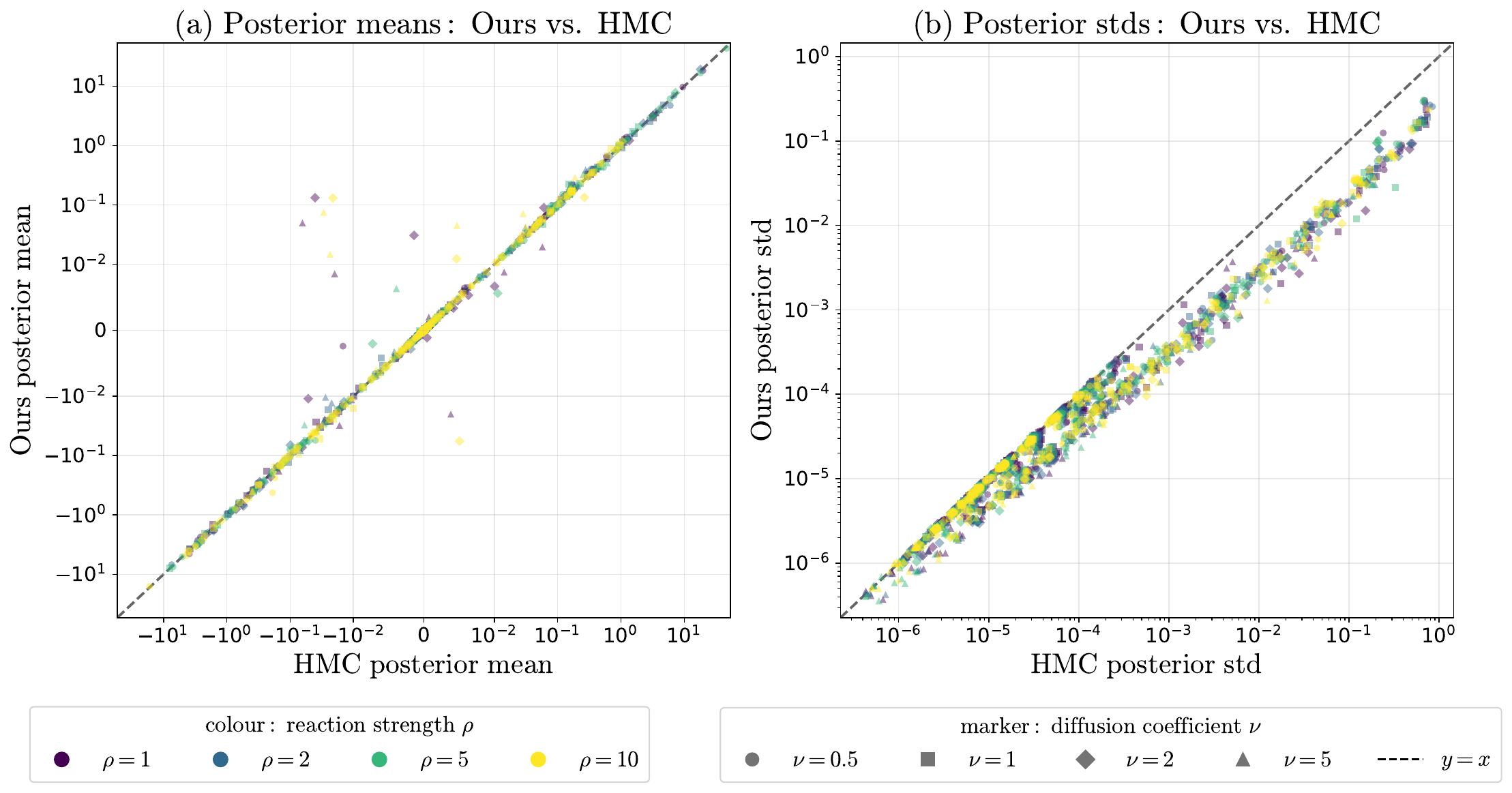}
    \caption{Comparison of our approach (ours) to HMC, implemented based on \cite{yang2021b}, for the Fisher-KPP equation (color: $\rho$; marker: $\nu$). We map posterior weights of HMC (x-axis) to our approach (y-axis) - points in \textbf{(a)} refer to the mean values, points in \textbf{(b)} refer to the standard deviations; the means agree closely, and our standard deviations are slightly smaller than, but proportional to, those of HMC. For readability, 25 random weights per trajectory are shown.}
    \label{fig:bayesian_comparison_fisher}
\end{figure}
\begin{figure}
    \centering
    \includegraphics[width=1.\linewidth]{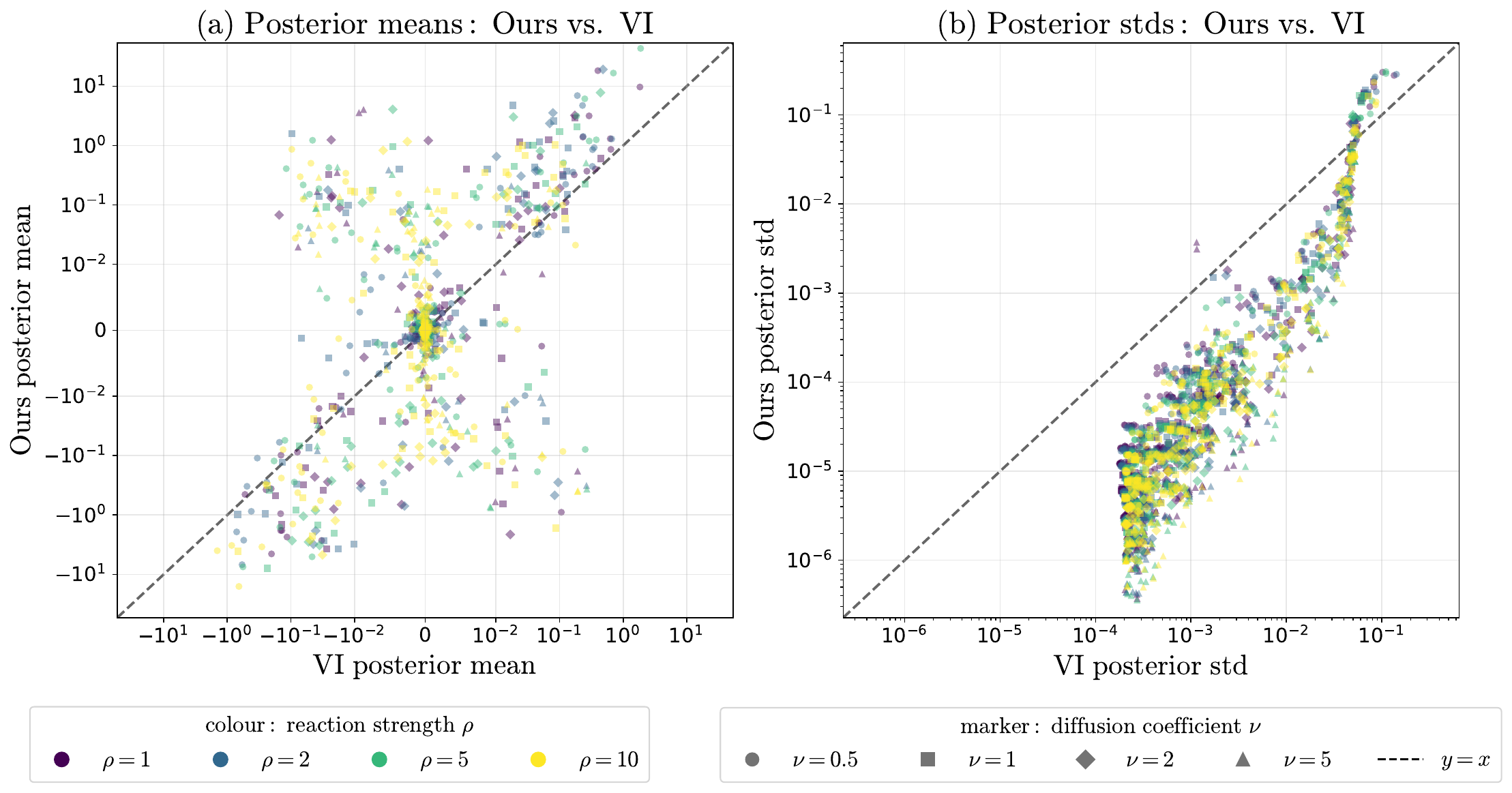}
    \caption{Comparison of our approach (ours) to VI, implemented based on \cite{yang2021b}, for the Fisher-KPP equation (colour: $\rho$; marker: $\nu$). We map posterior weights of VI (x-axis) to our approach (y-axis) - points in \textbf{(a)} refer to the mean values, points in \textbf{(b)} refer to the standard deviations; VI reaches neither our means nor our standard deviations, since it does not converge to the solution within the training budget. For readability, 25 random weights per trajectory are shown.}
    \label{fig:bayesian_comparison_ours_vi_fisher}
\end{figure}
\begin{figure}
    \centering
    \includegraphics[width=1.\linewidth]{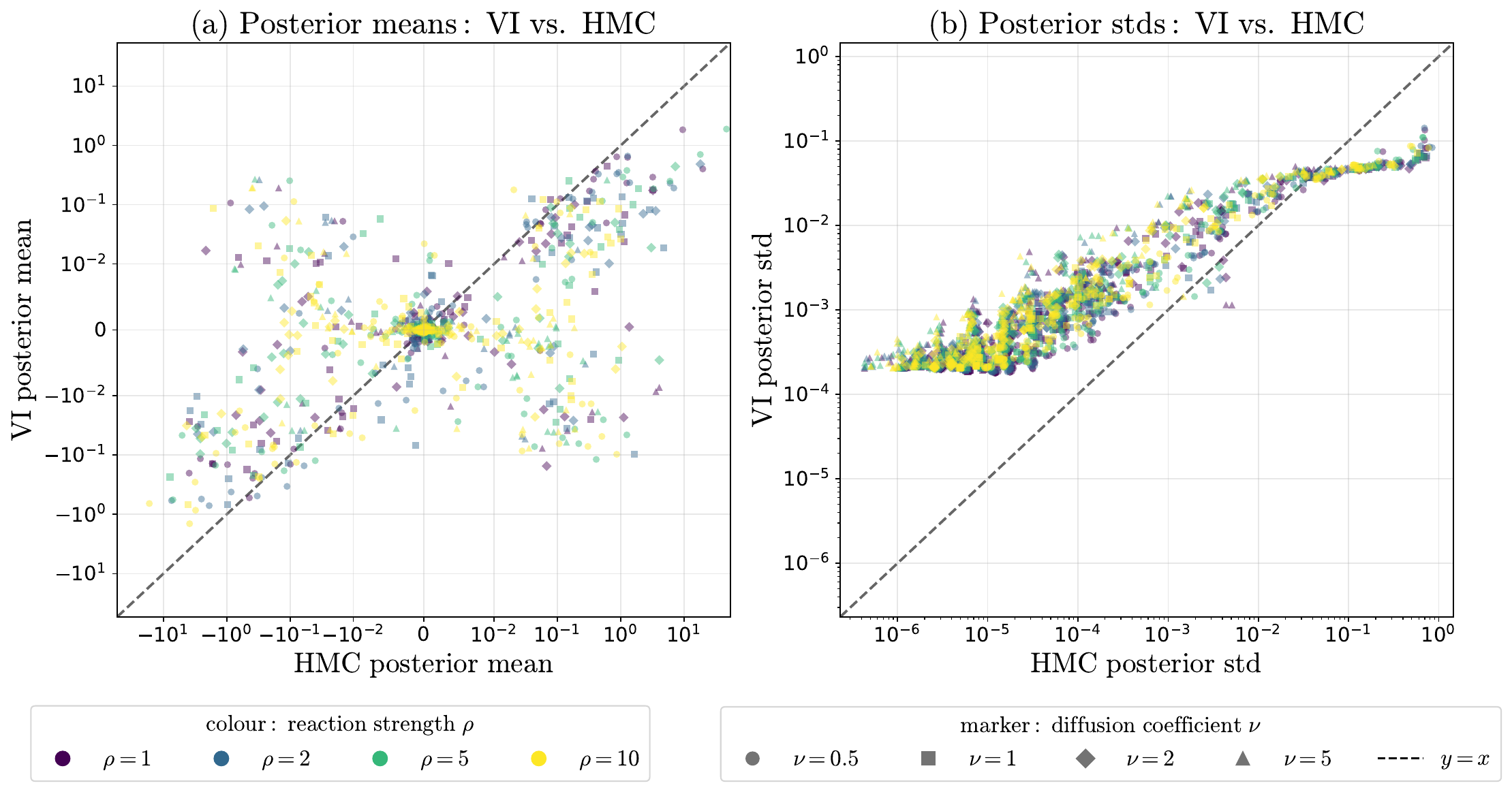}
    \caption{Comparison of VI to HMC, both implemented based on \cite{yang2021b}, for the Fisher-KPP equation (colour: $\rho$; marker: $\nu$). We map posterior weights of HMC (x-axis) to VI (y-axis) - points in \textbf{(a)} refer to the mean values, points in \textbf{(b)} refer to the standard deviations; VI neither recovers the HMC means nor its standard deviations, which it overestimates by up to two orders of magnitude. For readability, 25 random weights per trajectory are shown.}
    \label{fig:bayesian_comparison_vi_hmc_fisher}
\end{figure}

Figure \ref{fig:uq_fisher} compares the uncertainty calibration across these Bayesian methods. Unlike the advection results, the ratio of our predicted uncertainty to the HMC one is not as constant, though it does not change much. Calibration results also show similar patterns between our method and HMC.

\begin{figure}
    \centering
    \includegraphics[width=1.\linewidth]{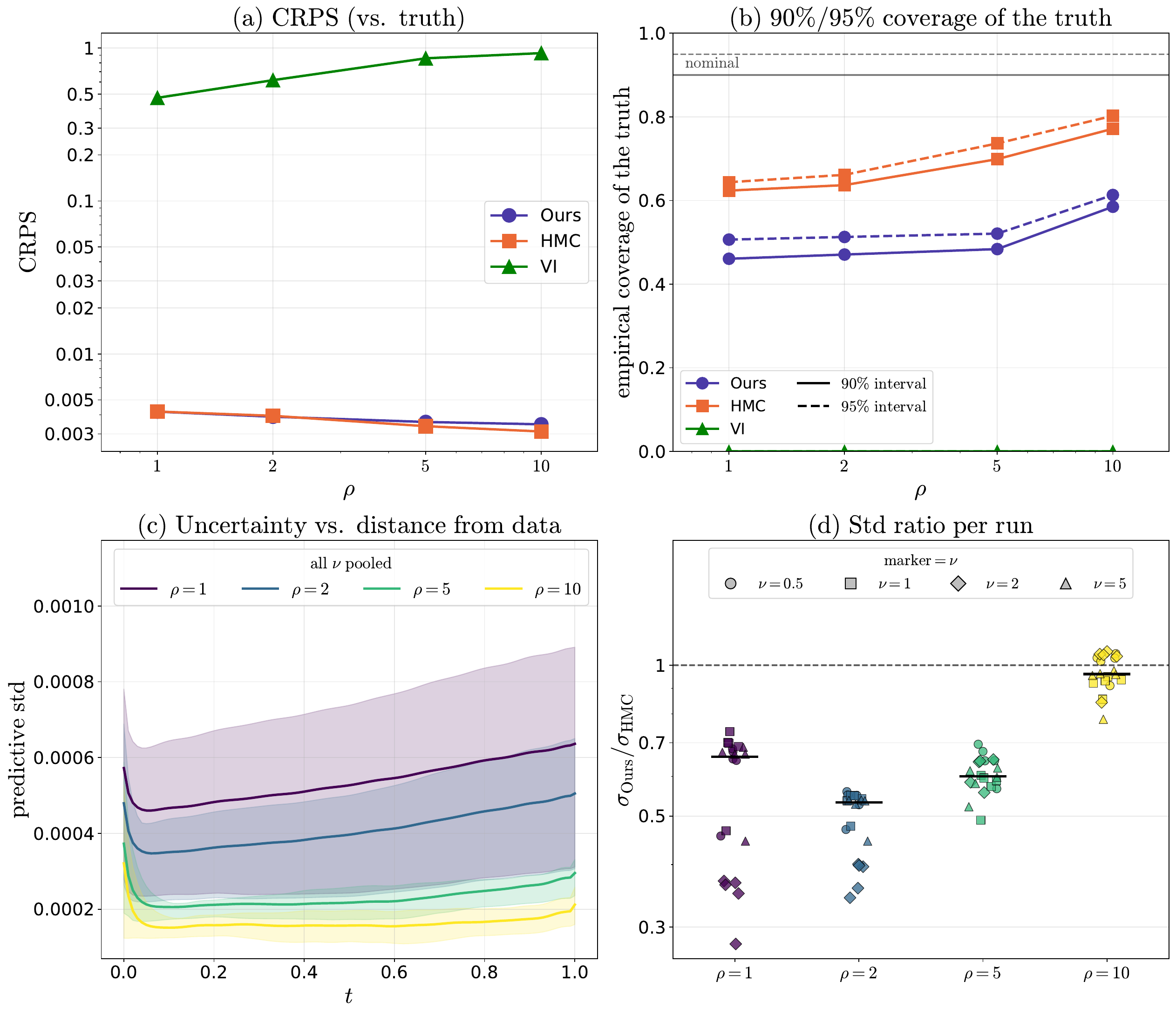}
    \caption{Calibration and uncertainty structure for the Fisher-KPP equation across reaction strengths $\rho$, pooled over the diffusion coefficients $\nu$. \textbf{(a)} CRPS against the exact solution; \textbf{(b)} empirical coverage of the exact solution by the nominal 90\,\% (solid) and 95\,\% (dashed) predictive intervals; \textbf{(c)} our mean predictive standard deviation over time $t$ (band: minimum to maximum over runs); \textbf{(d)} per-run median ratio of our predictive standard deviation to that of HMC (bar: median over runs; marker: $\nu$). Ours closely follows HMC in CRPS but is under-covered relative to it, with median ratios between 0.53 and 0.96; VI fails in all settings.}
\label{fig:uq_fisher}
\end{figure}


%

%

\subsubsection{PDE residual prior}
\label{appendix:pde-residual}
Figure~\ref{fig:mse_residual_vs_epsilon} studies the effect of the PDE tolerance \(\epsilon_C\) on the advection equation. Decreasing \(\epsilon_C\) enforces the PDE constraint more strongly and generally reduces both the prediction error and the PDE residual. However, when the constraint becomes too strong, especially for higher wave speeds (\(b\geq4\)), the solution can collapse towards a constant function. Such a solution satisfies the advection equation with a very small residual but does not preserve the propagated initial condition. Thus, \(\epsilon_C\) controls the balance between enforcing the PDE and retaining information from the observed initial condition.

\begin{figure}
    \centering
    \includegraphics[width=1.\linewidth]{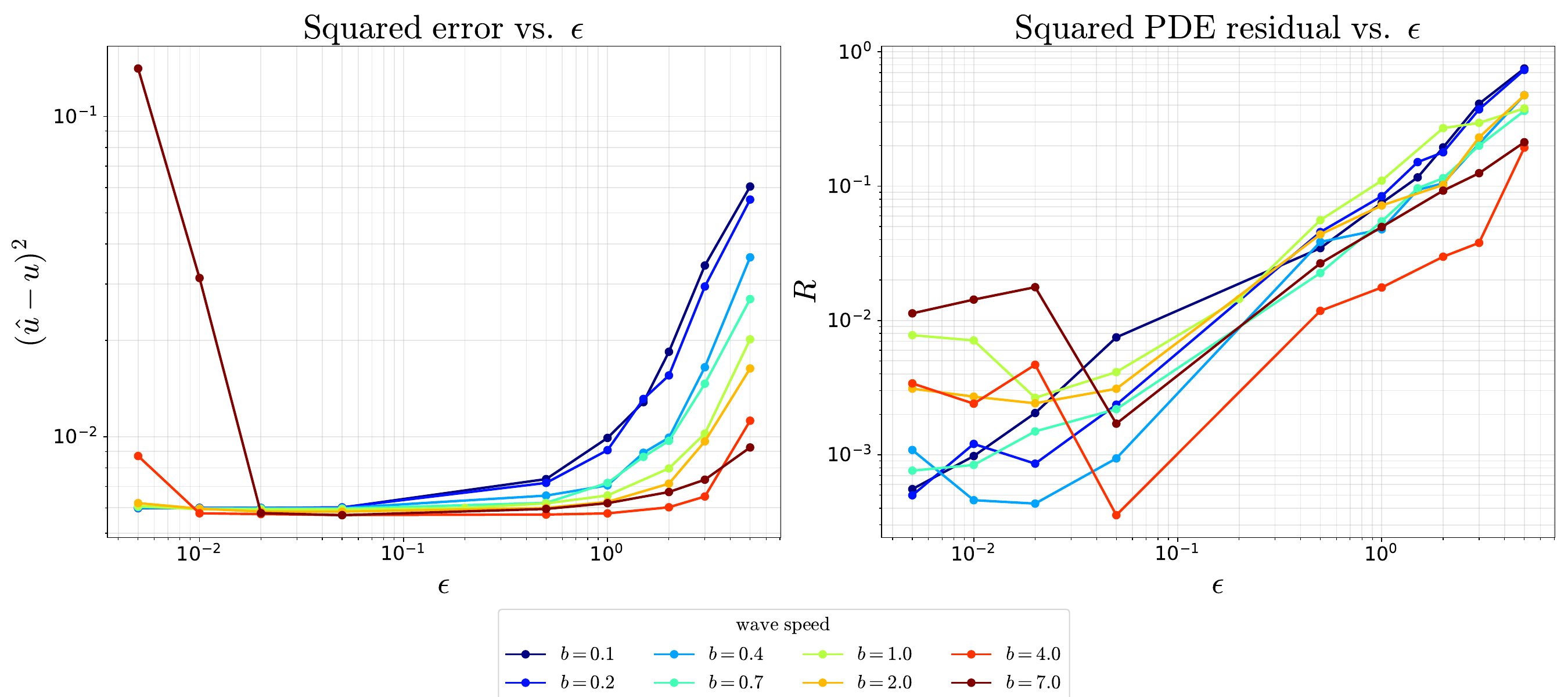}
    \caption{Effect of the PDE tolerance $\epsilon_C$ for the advection equation, ten trajectories per wave speed $b$ (lines: medians; bands: interquartile ranges; dotted: the default $\epsilon_C=0.05$). \textbf{(a)} MSE of the predictive mean against the exact solution; \textbf{(b)} mean squared PDE residual $R$. Tightening $\epsilon$ down to about $0.1$ lowers both; below that the error no longer improves, and at $b=7$ with $\epsilon=0.005$ the model collapses to a constant solution, i.e., exhibits a failure mode.}
    \label{fig:mse_residual_vs_epsilon}
\end{figure}

\end{document}